\definecolor{lightgray}{gray}{0.9}
\DeclareMathOperator*{\argmax}{\arg\!\max}
\newcommand{\defined}{\stackrel{\text{def}}{=}}
\title{Faster Verified Explanations for Neural Networks} 
\author{Alessandro {De Palma}\footnote{\hspace{-3.5pt}Work partly carried out at Inria $\&$ ENS $\mid$ PSL, France.}}{London School of Economics and Political Science, UK}{a.de-palma@lse.ac.uk}{https://orcid.org/0009-0002-7229-7723}{}
\author{Greta Dolcetti}{Ca' Foscari University of Venice, Italy}{greta.dolcetti@unive.it}{https://orcid.org/0000-0002-2983-9251}{}
\author{Caterina Urban}{Inria $\&$ ENS $\mid$ PSL, France}{caterina.urban@inria.fr}{https://orcid.org/0000-0002-8127-9642}{}
\authorrunning{A. De Palma and G. Dolcetti and C. Urban} 
\keywords{Verified Explanations, eXplainable Artificial Intelligence (XAI), Local Robustness, Neural Network Verification, Static Analysis} 
\begin{document}

\maketitle              

\begin{abstract}

	Verified explanations are a 
	principled way to explain the decisions taken by neural networks, which are otherwise black-box in nature.
	However, these techniques face significant scalability challenges, as they require multiple calls to neural network verifiers, each of them with an exponential worst-case complexity.
	We present \textsc{FaVeX}, a novel algorithm to compute verified explanations. 
	\textsc{FaVeX} accelerates the computation by dynamically combining batch and sequential processing of input features, and by reusing information from previous queries, both when proving invariances with respect to certain input features, and when searching for feature assignments altering the prediction.
	Furthermore, we present a novel and hierarchical definition of verified explanations, termed verifier-optimal robust explanations, that explicitly factors the incompleteness of network verifiers within the explanation.
	Our comprehensive experimental evaluation demonstrates the superior scalability of both \textsc{FaVeX}, and of verifier-optimal robust explanations, which together can produce meaningful formal explanation on networks with hundreds of thousands of non-linear activations.
\end{abstract}

\section{Introduction}\label{sec:intro}

%

\begin{figure}[t]
    \centering

    \begin{subfigure}[b]{0.4\textwidth}
        \centering
        \includegraphics[width=0.46\textwidth]{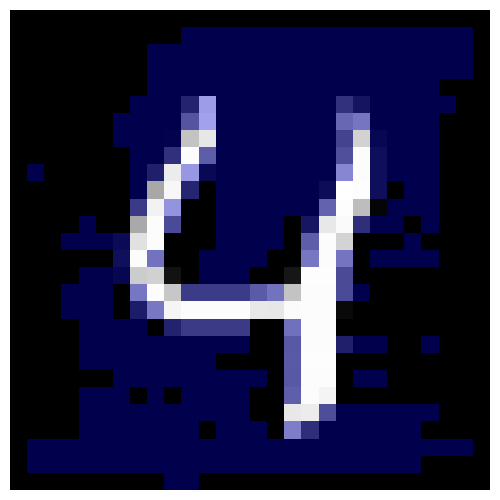}
        \caption{\centering Robust explanation.}
        \label{fig:sub1}
    \end{subfigure}
    \begin{subfigure}[b]{0.4\textwidth}
        \centering
        \includegraphics[width=0.46\textwidth]{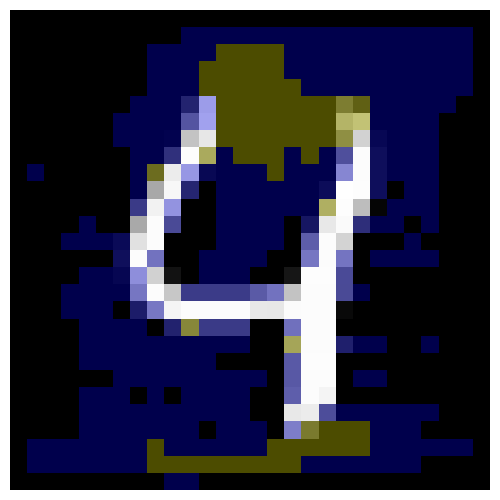}
        \caption{\centering Verifier-optimal robust explanation.}
        \label{fig:sub2}
    \end{subfigure}
    
    \caption{Standard~(a) and verifier-optimal~(b) robust explanations ($\epsilon = 0.25$) of the fifth image of the MNIST test set for a state-of-the-art network from the certified training literature~\cite{DePalma24}. \emph{Counterfactuals} are highlighted in \emph{yellow}, \emph{unknowns} in \emph{blue}, and \emph{invariants} are not highlighted. While the robust explanation is too large to be meaningful, the counterfactuals in~(b) point to an interpretable digit transformation. \label{fig:intro}}
\end{figure}

Machine learning models have demonstrated remarkable performance across a wide range of tasks, from image classification to natural language processing. Yet, as these models are increasingly deployed in safety-critical domains -- such as finance, transportation, and even healthcare -- concerns about trust, accountability, and robustness have become paramount.
In such contexts, explainability is essential: users and auditors must understand \emph{why} a model produces a particular prediction, and whether that prediction remains stable under small, semantically meaningful perturbations of the input~\cite{GuidottiMRTGP19, doshi2017towards}.

A promising line of research in eXplainable Artificial
Intelligence addresses this need through \emph{verified explanations}, e.g.,~\cite{MarquesSilva22b,MarquesSilva24}, among others, which aim to formally certify which input features are responsible for a model prediction by leveraging the theory and tools of machine learning verification~\cite{Albarghouthi21,Urban21}. Thus, verified explanations aim to provide rigorous guarantees, in contrast to heuristic or gradient-based methods, e.g.,~\cite{Hechtlinger16,Lundberg17,Ribeiro16}.
Among verified explanations, \emph{(optimal) robust explanations}~\cite{Ignatiev19b,LaMalfa21,Wu23,Huang24} have emerged as principled and mathematically grounded notions linking local robustness to the minimality of feature sets preserving a model prediction: they identify subsets of input features that, if left unchanged, cannot alter the predicted outcome. Formally, this corresponds to establishing local robustness of the model with respect to perturbations on the remaining input features. 

However, despite their theoretical appeal, optimal robust explanations are difficult to scale beyond small neural networks with tens of non-linear activations or low-dimensional input spaces~\cite{Wu23, Wu24}. Indeed, the computation of optimal robust explanations requires repeatedly querying neural network verifiers, which solve NP-hard problems~\cite{Katz17} and often exhibit their exponential worst-case complexity. In practice, even finding a single counterfactual -- a necessary step to ensure the \emph{optimality} of an explanation -- can quickly become infeasible for state-of-the-art neural networks with hundreds of thousands of non-linear activations (we demonstrate and discuss this in detail in Section~\ref{sec:exp-definitions}). 

Crucially, the definition of optimal robust explanations assumes that verification is complete, i.e., always either proving robustness or producing a counterfactual. This assumption is unrealistic in practice because, due to their exponential worst-case complexity, modern verifiers have to routinely enforce timeouts to handle large models efficiently, de facto making compromises about completeness to gain better performance~\cite{Bunel20, Katz17, TjengXT19}. As a consequence, although optimal explanations are a useful semantic ideal, they are rarely practically achievable for modern architectures. For instance, Figure~\ref{fig:sub1} shows the robust (but not optimal) explanation that can be computed in practice
for the classification of the fifth image in the MNIST testing data set by a state-of-the-art convolutional neural network (with roughly $230k$ ReLUs) from the certified training literature~\cite{DePalma24}. Notably, the explanation (i.e., the pixels that are highlighted in blue) covers a large portion of the image, making it too coarse to provide meaningful insights in practical settings. On the other hand, the counterfactuals (highlighted in yellow) directly point to a highly-interpretable digit transformation.


\subparagraph*{Contributions.}
We address the aforementioned challenges through three main contributions:
\begin{enumerate}
    \item \textbf{Verifier-Optimal Robust Explanations (Section~\ref{sec:v-optimal-x})}.
     We introduce a new and practically achievable counterpart of optimal robust explanations -- \emph{verifier-optimal robust explanations} -- that explicitly incorporates the behavior of a given verifier and its de facto incompleteness. Our notion partitions the input features into: (i) \emph{invariants}, for which the verifier proves robustness, (ii) \emph{counterfactuals}, for which the verifier identifies perturbations causing misclassification, and (iii) \emph{unknowns}, where verification remains inconclusive. This viewpoint better reflects what can be guaranteed in practice and enables the discovery of counterfactuals at a scale unattainable with prior definitions. 
     Figure~\ref{fig:sub2} shows the verifier-optimal robust explanation computed by our approach for the same MNIST image and convolutional model of Figure~\ref{fig:sub1}. In this case, the explanation is \emph{hierarchical}, with the most informative pixels (i.e., the counterfactuals, highlighted in yellow) covering only a small portion of the image.
     Importantly, the definition of verifier-optimal robust explanation is a generalization of optimal robust explanations:
     when the verifier is complete, the set of unknowns is empty and verifier-optimal robust explanations coincide exactly with optimal robust explanations (Lemma~\ref{lem}).     

    \item \textbf{\textsc{FaVeX}: A Fast Algorithm for Computing Verified Explanations (Section~\ref{sec:favex})}.
    We propose a new algorithm that substantially accelerates the computation of both standard and verifier-optimal explanations. \textsc{FaVeX} incorporates three key innovations:
    \begin{enumerate}[(a)]
    \item a hybrid query processing strategy that dynamically combines batch-based processing with binary search and sequential feature evaluation (Section~\ref{subsec:favex});
    \item branch reuse within branch-and-bound verification to leverage information from previous verification queries (Section~\ref{subsec:reuse}); and
    \item a restricted-space adversarial attack to accelerate counterfactual search (Section~\ref{subsec:attack}).
    \end{enumerate}
    We also introduce a traversal strategy aligned with the verifier's logit-difference objective (Algorithm \ref{alg:traversal}).

    \item \textbf{Implementation and Experimental Evaluation (Sections~\ref{sec:impl}~and~\ref{sec:evaluation})}.
    We implemented \textsc{FaVeX} using the \textsc{PyTorch} deep learning library, employing the \textsc{OVAL} branch-and-bound framework~\cite{Bunel2018,Bunel2020,DePalma21,sparsealgosDePalma2024} and its $\alpha$-$\beta$-CROWN implementation~\cite{Xu21,Wang21} as the backbone for our verifier. We evaluate it on both small networks similar to those employed in the verified explanation literature~\cite{Wu23,Wu24}, and on significantly larger state-of-the-art certifiably-robust neural networks.
The results demonstrate that:
\begin{enumerate}[(i)]
\item \textsc{FaVeX} significantly reduces the time required to compute standard robust explanations on small fully-connected networks;
\item verifier-optimal robust explanations can be computed efficiently even on larger convolutional networks; and
\item \textsc{FaVeX} finds counterfactuals even for networks with hundreds of thousands of 
activations, making formal explanations practical and meaningful at scale.
\end{enumerate}
\end{enumerate}

%

\section{Standard vs. Verifier-Optimal Robust Explanations}\label{sec:overview}

In this section, through a simple 
example\footnote{A self-contained script to reproduce this example is available in our source code repository at: \texttt{https://github.com/alessandrodepalma/favex/blob/main/ecoop2026.py}}, we provide an informal overview of the difference between our verifier-optimal robust explanations and standard robust explanations.

\begin{figure}[t]
\centering
\resizebox{200pt}{!}{%
\begin{tikzpicture}[x=4.3cm, y=0.6cm, >=stealth,
  neuron/.style={circle, draw, minimum size=0.45cm, inner sep=0pt, font=\tiny},
  edge/.style={-, gray, line width=0.1pt},
  lbl/.style={sloped, font=\fontsize{4.5pt}{4.5pt}\selectfont, fill=white, inner sep=0.5pt}]

\node[neuron] (x00) at (0,0)  {$x_{1}$};
\node[neuron] (x01) at (0,-1) {$x_{2}$};
\node[neuron] (x02) at (0,-2) {$x_{3}$};
\node[neuron] (x03) at (0,-3) {$x_{4}$};
\node[neuron] (x04) at (0,-4) {$x_{5}$};
\node[neuron] (x05) at (0,-5) {$x_{6}$};
\node[neuron] (x06) at (0,-6) {$x_{7}$};
\node[neuron] (x07) at (0,-7) {$x_{8}$};
\node[neuron] (x08) at (0,-8) {$x_{9}$};

\node[neuron] (x10) at (1,-1.75) {$x_{10}$};
\node[neuron] (x11) at (1,-3.25) {$x_{11}$};
\node[neuron] (x12) at (1,-4.75) {$x_{12}$};
\node[neuron] (x13) at (1,-6.25) {$x_{13}$};

\node[neuron] (x20) at (2,-3.4) {$x_{14}$};
\node[neuron] (x21) at (2,-4.6) {$x_{15}$};

\foreach \i/\w in {
  0/-3.6, 1/-0.4, 2/3.1, 3/8.0,
  4/-6.2, 5/-1.4, 6/0.8, 7/3.0, 8/-4.0%
}
  \draw[edge] (x0\i) -- node[lbl, pos=0.10] {\w} (x10);

\foreach \i/\w in {
  0/-6.7, 1/-8.2, 2/-3.8, 3/5.3,
  4/-2.3, 5/-4.1, 6/0.6, 7/-0.9, 8/0.9%
}
  \draw[edge] (x0\i) -- node[lbl, pos=0.15] {\w} (x11);

\foreach \i/\w in {
  0/3.9, 1/4.2, 2/2.6, 3/6.2,
  4/2.3, 5/9.1, 6/9.2, 7/3.4, 8/6.2%
}
  \draw[edge] (x0\i) -- node[lbl, pos=0.20] {\w} (x12);

\foreach \i/\w in {
  0/1.4, 1/-8.6, 2/-4.6, 3/-3.2,
  4/0.5, 5/-6.2, 6/-2.7, 7/-9.9, 8/-4.9%
}
  \draw[edge] (x0\i) -- node[lbl, pos=0.25] {\w} (x13);

\foreach \i/\w in {10/-11.0, 11/12.4, 12/-12.7, 13/12.9}
  \draw[edge] (x\i) -- node[lbl, pos=0.15] {\w} (x20);

\foreach \i/\w in {10/15.2, 11/-19.6, 12/11.1, 13/-13.5}
  \draw[edge] (x\i) -- node[lbl, pos=0.15] {\w} (x21);

\end{tikzpicture}
}
\caption{Fully connected feed-forward neural network classifier with ReLU activations, trained on the original Wisconsin Breast Cancer Database\footnotemark.}
\label{fig:bcw}
\end{figure}
\footnotetext{\url{https://archive.ics.uci.edu/dataset/15/breast+cancer+wisconsin+original}}

Let us consider the classifier with $9$-dimensional input in Figure~\ref{fig:bcw} and let $\mathbf{x} = (x_1, \dots, x_9) = (1.0, 0.7, 0.7, 0.2, 0.8, 0.4, 0.7, 0.3, 0.2)$ be the input vector for which we want to compute an explanation. For simplicity, we employ a basic deletion-based algorithm~\cite{Wu23,Huang24} that processes input features indexes in sequential order from $1$ to $9$. Each step introduces $\epsilon$-bounded $\ell_\infty$ perturbations (i.e., such that the maximum absolute value difference for any feature is at most $\epsilon$) to the input feature under analysis and selected previous features (depending on the explanation definition), while all remaining input features remain unchanged. Let us fix $\epsilon = 0.6$ for this example.
A verifier is then queried to determine whether this perturbation is robust (i.e., the model's prediction remains the same), a counterexample is found (i.e., applying the perturbation causes the model to change its prediction), or a timeout is reached (i.e., the query cannot be resolved within the time limit).

\subparagraph*{Standard Robust Explanations.}
%
\begin{figure}[b]
\centering
\begin{tikzpicture}[
    feat/.style={draw, minimum width=0.8cm, minimum height=0.2cm},
    irrel/.style={feat, fill=gray!20},
    analyzed/.style={feat, fill=gray!60},
    relevant/.style={feat, fill=yellow!70!orange}
]

\foreach \i in {1,...,9}
    \node[left] at (0,{-\i*0.35}) {$x_{\i}$};

\foreach \i in {1,...,9}{
    \node[irrel] at (1,{-\i*0.35}) {};
}
\node[circle, fill=black, inner sep=1pt] at (1,{-1*0.35}) {};

\foreach \i in {2,...,9}{
    \node[irrel] at (2,{-\i*0.35}) {};
}
\node[analyzed] at (2,{-1*0.35}) {};
\node at (2,{-1*0.35}) {\scriptsize \checkmark};
\node[circle, fill=black, inner sep=1pt] at (2,{-2*0.35}) {};

\foreach \i in {3,...,9}{
    \node[irrel] at (3,{-\i*0.35}) {};
}
\foreach \i in {1,2}{
    \node[analyzed] at (3,{-\i*0.35}) {};
    \node at (3,{-\i*0.35}) {\scriptsize \checkmark};
}
\node[circle, fill=black, inner sep=1pt] at (3,{-3*0.35}) {};

\foreach \i in {4,...,9}{
    \node[irrel] at (4,{-\i*0.35}) {};
}
\foreach \i in {1,2,3}{
    \node[analyzed] at (4,{-\i*0.35}) {};
    \node at (4,{-\i*0.35}) {\scriptsize \checkmark};
}
\node[circle, fill=black, inner sep=1pt] at (4,{-4*0.35}) {};

\foreach \i in {5,...,9}{
    \node[irrel] at (5,{-\i*0.35}) {};
}
\foreach \i in {1,2,3,4}{
    \node[analyzed] at (5,{-\i*0.35}) {};
    \node at (5,{-\i*0.35}) {\scriptsize \checkmark};
}
\node[circle, fill=black, inner sep=1pt] at (5,{-5*0.35}) {};

\foreach \i in {6,...,9}{
    \node[irrel] at (6,{-\i*0.35}) {};
}
\foreach \i in {1,2,3,4,5}{
    \node[analyzed] at (6,{-\i*0.35}) {};
    \node at (6,{-\i*0.35}) {\scriptsize \checkmark};
}
\node[circle, fill=black, inner sep=1pt] at (6,{-6*0.35}) {};

\foreach \i in {7,...,9}{
    \node[irrel] at (7,{-\i*0.35}) {};
}
\foreach \i in {1,2,3,4,5}{
    \node[analyzed] at (7,{-\i*0.35}) {};
    \node at (7,{-\i*0.35}) {\scriptsize \checkmark};
}
\foreach \i in {6}{
    \node[relevant] at (7,{-\i*0.35}) {};
    \node at (7,{-\i*0.35}) {\scriptsize $\times$};
}
\node[circle, fill=black, inner sep=1pt] at (7,{-7*0.35}) {};

\foreach \i in {8,9}{
    \node[irrel] at (8,{-\i*0.35}) {};
}
\foreach \i in {1,2,3,4,5}{
    \node[analyzed] at (8,{-\i*0.35}) {};
    \node at (8,{-\i*0.35}) {\scriptsize \checkmark};
}
\node[relevant] at (8,{-7*0.35}) {};
    \node at (8,{-7*0.35}) {\scriptsize ?};
\foreach \i in {6}{
    \node[relevant] at (8,{-\i*0.35}) {};
    \node at (8,{-\i*0.35}) {\scriptsize $\times$};
}
\node[circle, fill=black, inner sep=1pt] at (8,{-8*0.35}) {};

\node[irrel] at (9,{-9*0.35}) {};
\foreach \i in {1,2,3,4,5,8}{
    \node[analyzed] at (9,{-\i*0.35}) {};
    \node at (9,{-\i*0.35}) {\scriptsize \checkmark};
}
\node[relevant] at (9,{-7*0.35}) {};
\node at (9,{-7*0.35}) {\scriptsize ?};
\foreach \i in {6}{
    \node[relevant] at (9,{-\i*0.35}) {};
    \node at (9,{-\i*0.35}) {\scriptsize $\times$};
}
\node[circle, fill=black, inner sep=1pt] at (9,{-9*0.35}) {};

\foreach \i in {1,2,3,4,5,8}{
    \node[analyzed] at (10,{-\i*0.35}) {};
    \node at (10,{-\i*0.35}) {\scriptsize \checkmark};
}
\node[relevant] at (10,{-9*0.35}) {};
\node at (10,{-9*0.35}) {\scriptsize ?};
\node[relevant] at (10,{-7*0.35}) {};
\node at (10,{-7*0.35}) {\scriptsize ?};
\foreach \i in {6}{
    \node[relevant] at (10,{-\i*0.35}) {};
    \node at (10,{-\i*0.35}) {\scriptsize $\times$};
}
\end{tikzpicture}
\caption{Example of computing a standard robust explanation on 9 features. 
The verification steps proceed from left to right; the dotted box indicates the feature under analysis, dark grey shows the invariants (\tikz[baseline=-0.6ex]{\node[draw, fill=gray!60, minimum width=0.34cm, minimum height=0.18cm] {\scriptsize $\checkmark$};}). Yellow denotes the explanations, including both features for which a counterexample has been found (\tikz[baseline=-0.6ex]{\node[draw, fill=yellow!70!orange, minimum width=0.34cm, minimum height=0.18cm] {\scriptsize $\times$};}), and those for which a timeout has been encountered (\tikz[baseline=-0.6ex]{\node[draw, fill=yellow!70!orange, minimum width=0.34cm, minimum height=0.18cm] {\scriptsize ?};}). The final results of the analysis can be seen in the rightmost column.}
\label{fig:overview-standard}
\end{figure}
%
Figure~\ref{fig:overview-standard} shows how the analysis proceeds to compute a standard robust (but, in practice, not optimal) explanation using branch-and-bound verification with \textsc{CROWN} / \textsc{DeepPoly}~\cite{Zhang18,Singh19a} 
(cf. Section~\ref{sec:verification}).
First, the  verifier is queried with the $\ell_\infty$ perturbation applied only to $x_1$. In this case, robustness is verified: the feature index is added to the invariants set $\mathcal{R}_\mathbf{x}$ (initially empty). Next, the perturbation is applied to the features indexed by $\mathcal{R}_\mathbf{x}$ (currently just $x_1$) and to the new feature under analysis, $x_2$. Robustness is again verified:  this feature index is added to $\mathcal{R}_\mathbf{x}$. The analysis then proceeds with $x_3$, $x_4$, $x_5$, finding all of them to be invariants.
Next, for $x_6$, the verifier finds a counterexample. This feature is therefore considered part of the explanation and will not be perturbed in future steps. The analysis continues with $x_7$. 
When $x_7$ is perturbed (along with the invariants $x_1$, $x_2$, $x_3$, $x_4$, and $x_5$), the verifier times out. Since the definition of optimal robust explanation does not account for the incompleteness caused by timeouts, the feature is considered part of the explanation from this point on. The computation continues until all input features have been processed: another timeout is encountered for $x_9$, while $x_8$ is invariant. The explanation indiscriminately comprises features $x_6$, $x_7$, and $x_9$.
However, given the verifier timeouts, $x_7$ and $x_9$ may actually be invariants.

\subparagraph*{Verifier-Optimal Robust Explanations.}
%
\begin{figure}
\centering
\begin{tikzpicture}[
    feat/.style={draw, minimum width=0.8cm, minimum height=0.2cm},
    irrel/.style={feat, fill=gray!20},
    analyzed/.style={feat, fill=gray!60},
    relevant/.style={feat, fill=yellow!70!orange},
    blueish/.style={feat, fill=blue!40}
]

\foreach \i in {1,...,9}
    \node[left] at (0,{-\i*0.35}) {$x_{\i}$};

\foreach \i in {1,...,9}{
    \node[irrel] at (1,{-\i*0.35}) {};
}
\node[circle, fill=black, inner sep=1pt] at (1,{-1*0.35}) {};

\foreach \i in {2,...,9}{
    \node[irrel] at (2,{-\i*0.35}) {};
}
\node[analyzed] at (2,{-1*0.35}) {};
\node at (2,{-1*0.35}) {\scriptsize \checkmark};
\node[circle, fill=black, inner sep=1pt] at (2,{-2*0.35}) {};

\foreach \i in {3,...,9}{
    \node[irrel] at (3,{-\i*0.35}) {};
}
\foreach \i in {1,2}{
    \node[analyzed] at (3,{-\i*0.35}) {};
    \node at (3,{-\i*0.35}) {\scriptsize \checkmark};
}
\node[circle, fill=black, inner sep=1pt] at (3,{-3*0.35}) {};

\foreach \i in {4,...,9}{
    \node[irrel] at (4,{-\i*0.35}) {};
}
\foreach \i in {1,2,3}{
    \node[analyzed] at (4,{-\i*0.35}) {};
    \node at (4,{-\i*0.35}) {\scriptsize \checkmark};
}
\node[circle, fill=black, inner sep=1pt] at (4,{-4*0.35}) {};

\foreach \i in {5,...,9}{
    \node[irrel] at (5,{-\i*0.35}) {};
}
\foreach \i in {1,2,3,4}{
    \node[analyzed] at (5,{-\i*0.35}) {};
    \node at (5,{-\i*0.35}) {\scriptsize \checkmark};
}
\node[circle, fill=black, inner sep=1pt] at (5,{-5*0.35}) {};

\foreach \i in {6,...,9}{
    \node[irrel] at (6,{-\i*0.35}) {};
}
\foreach \i in {1,2,3,4,5}{
    \node[analyzed] at (6,{-\i*0.35}) {};
    \node at (6,{-\i*0.35}) {\scriptsize \checkmark};
}
\node[circle, fill=black, inner sep=1pt] at (6,{-6*0.35}) {};

\foreach \i in {7,...,9}{
    \node[irrel] at (7,{-\i*0.35}) {};
}
\foreach \i in {1,2,3,4,5}{
    \node[analyzed] at (7,{-\i*0.35}) {};
    \node at (7,{-\i*0.35}) {\scriptsize \checkmark};
}
\foreach \i in {6}{
    \node[relevant] at (7,{-\i*0.35}) {};
    \node at (7,{-\i*0.35}) {\scriptsize $\times$};
}
\node[circle, fill=black, inner sep=1pt] at (7,{-7*0.35}) {};

\foreach \i in {8,9}{
    \node[irrel] at (8,{-\i*0.35}) {};
}
\foreach \i in {1,2,3,4,5}{
    \node[analyzed] at (8,{-\i*0.35}) {};
    \node at (8,{-\i*0.35}) {\scriptsize \checkmark};
}
\node[blueish] at (8,{-7*0.35}) {};
\node at (8,{-7*0.35}) {\scriptsize U};
\foreach \i in {6}{
    \node[relevant] at (8,{-\i*0.35}) {};
    \node at (8,{-\i*0.35}) {\scriptsize $\times$};
}
\node[circle, fill=black, inner sep=1pt] at (8,{-8*0.35}) {};

\node[irrel] at (9,{-9*0.35}) {};
\foreach \i in {1,2,3,4,5}{
    \node[analyzed] at (9,{-\i*0.35}) {};
    \node at (9,{-\i*0.35}) {\scriptsize \checkmark};
}
\node[blueish] at (9,{-7*0.35}) {};
\node at (9,{-7*0.35}) {\scriptsize U};
\foreach \i in {6,8}{
    \node[relevant] at (9,{-\i*0.35}) {};
    \node at (9,{-\i*0.35}) {\scriptsize $\times$};
}
\node[circle, fill=black, inner sep=1pt] at (9,{-9*0.35}) {};

\foreach \i in {1,2,3,4,5}{
    \node[analyzed] at (10,{-\i*0.35}) {};
    \node at (10,{-\i*0.35}) {\scriptsize \checkmark};
}
\node[blueish] at (10,{-7*0.35}) {};
\node at (10,{-7*0.35}) {\scriptsize U};
\foreach \i in {6,8,9}{
    \node[relevant] at (10,{-\i*0.35}) {};
    \node at (10,{-\i*0.35}) {\scriptsize $\times$};
}
\end{tikzpicture}
\caption{Example of computing a verifier-optimal robust explanation on 9 features. The verification steps proceed from left to right; the dotted box indicates the feature under analysis, dark grey shows the invariants (\tikz[baseline=-0.6ex]{\node[draw, fill=gray!60, minimum width=0.34cm, minimum height=0.18cm] {\scriptsize $\checkmark$};}), blue shows the unknowns (\tikz[baseline=-0.6ex]{\node[draw, fill=blue!40, minimum width=0.34cm, minimum height=0.18cm] {\scriptsize U};}), yellow shows the explanations for which a counterexample has been found (\tikz[baseline=-0.6ex]{\node[draw, fill=yellow!70!orange, minimum width=0.34cm, minimum height=0.18cm] {\scriptsize $\times$};}). The final results of the analysis can be seen in the rightmost column.}
\label{fig:overview-favex}
\end{figure}
Figure~\ref{fig:overview-favex} instead illustrates the computation of a verifier-optimal robust explanation. In this case the analysis explicitly accounts for verifier limitations due to timeouts. Thus, rather than considering $x_7$ part of the explanation, the analysis adds the feature index to the unknowns set $\mathcal{U}_\mathbf{x}$ (initially empty). The analysis then proceeds by allowing {perturbations} also to the features indexed by $\mathcal{U}_\mathbf{x}$. 
This way, a counterexample is found for both $x_9$ and $x_8$ (which was instead invariant in Figure~\ref{fig:overview-standard}). This can occur because the considered perturbation region is effectively larger, as it also includes the features indexed by $\mathcal{U}_\mathbf{x}$. (This effect can also be observed in practice beyond this illustrative example: for instance, Figure~\ref{fig:sub2} has $4$ more invariant pixels than Figure~\ref{fig:sub1}.) The explanation, containing the features $x_6$, $x_7$, $x_8$, $x_9$, is larger than the one in Figure~\ref{fig:overview-standard} but it is \emph{hierarchical}: features $x_6$, $x_8$, $x_9$  (counterfactuals) are more important to the classification outcome, while $x_7$ (unknown) is less important.

%

\section{Background}\label{sec:background}

In the following, we will denote vectors by boldface lowercase letters ($\mathbf{x} \in \mathbb{R}^d$), and use subscripts to index their entries (for instance, $\mathbf{x}_i$ denotes the $i$-th entry). Furthermore, we will denote sets by calligraphic uppercase letters (e.g, $\mathcal{A}$), and use $\mathbf{x}_{\mathcal{A}}$ to denote the vector containing the entries of $\mathbf{x}$ indexed by the elements of $\mathcal{A}$.

\subsection{Neural Network Verification} \label{sec:verification}

\subparagraph*{Neural Networks.}
A \emph{neural network} classifier is a function $f : \mathbb{R}^{d} \rightarrow \mathbb{R}^{k}$ mapping a $d$-dimensional input vector of \emph{features} $\mathbf{x}$ to a $k$-dimensional output vector $f(\mathbf{x})$ of \emph{logits}, which determine the network's classification output $y_f  : \mathbb{R}^{k} \rightarrow \{1, \dots, k\}$, defined as $y_f (\mathbf{x}) \defined \argmax_i f (\mathbf{x})_i$.
We here focus on neural networks with rectified linear unit (ReLU) activations, which consist of a sequential composition $f = f_l \circ \dots \circ f_1$ of $l$ layers, where each layer $f_i\colon \mathbb{R}^m \rightarrow \mathbb{R}^n$, is either an affine transformation $f_i(\mathbf{x}) = \mathbf{A}\mathbf{x} + \mathbf{b}$, with $\mathbf{A} \in \mathbb{R}^{n\times m}$, $\mathbf{b} \in \mathbb{R}^n$, or an entry-wise application of the ReLU activation function, $\mathrm{ReLU}(x) \defined \max(0, x)$; the final layer $f_l$ is an affine transformation.

\subparagraph*{Local Robustness.} Neural network verification aims to provide formal guarantees about neural network behavior. In particular, verifying neural network \emph{safety}, involves proving that all network outputs corresponding to inputs satisfying a given input property also satisfy a specified output property. A widely studied safety property is \emph{local robustness}, which ensures that small (up to a chosen threshold) perturbations of an input do not change the classification output of a neural network. Formally, a neural network classifier $f : \mathbb{R}^{d} \rightarrow \mathbb{R}^{k}$ is said to be locally robust on input $\mathbf{x} \in \mathbb{R}^d$ if, given a set of allowed perturbations $C(\mathbf{x}) \subseteq \mathbb{R}^{d}$, for all $\mathbf{x}' \in \mathbb{R}^{d}$ we have:
\[
\mathbf{x}' \in C(\mathbf{x}) \Rightarrow y_f (\mathbf{x}') = y_f (\mathbf{x})
\]
that is, the classification output of the neural network is invariant to perturbations in $C(\mathbf{x})$. In the following, we focus on perturbations within the $\ell_\infty$-ball with radius $\epsilon > 0$ centered at $\mathbf{x}${, i.e., 
$C(\mathbf{x}) = B^\epsilon(\mathbf{x}) \defined \{ \mathbf{x}' \in \mathbb{R}^{d} \mid \left\lVert  \mathbf{x}' -  \mathbf{x}  \right\rVert_{\infty} \leq \epsilon \}$, 
where $\left\lVert
  \mathbf{v} \right\rVert_{\infty} \defined \max_i |v_i|$ denotes the $\ell_\infty$ norm of a vector
  $\mathbf{v}$.
In some cases, we are interested in perturbations that affect only a subset of the input features. Let $\mathcal{A} \subseteq \{1, \dots, d\}$ denote the indices of input features that may be perturbed. We will write $\overline{\mathcal{A}} := \{1, \dots, d \} \setminus \mathcal{A}$ for the complement of $\mathcal{A}$ under $\{1, \dots, d\}$. We define the set of allowed perturbations restricted to $\mathcal{A}$ as $B^\epsilon_\mathcal{A}(\mathbf{x}) \defined \{ \mathbf{x}' \in \mathbb{R}^{d} \mid \mathbf{x}'_{\overline{\mathcal{A}}} = \mathbf{x}^{}_{\overline{\mathcal{A}}} \land \left\lVert  \mathbf{x}'_\mathcal{A} -  \mathbf{x}^{}_\mathcal{A}  \right\rVert_{\infty} \leq \epsilon \}$.

We define a \emph{robustness query} as a tuple $(\mathbf{x}, \mathcal{A}, \epsilon, f)$, where $\mathbf{x}$ denotes the input vector, $\mathcal{A}$ the set of the indices of the perturbed input features, $\epsilon$ the perturbation radius, and $f$ the employed network. Specifically, $(\mathbf{x}, \mathcal{A}, \epsilon, f)$ amounts to assessing local robustness of $f$ on $\mathbf{x}$ given the set of allowed perturbations $B^\epsilon_\mathcal{A}(\mathbf{x})$. Let $\mathcal{Q}$ be the set of all robustness queries. A \emph{neural network verifier} $v : \mathcal{Q} \rightarrow \{-1, 0, 1\}$ is a function taking a robustness query as input, and returning: $1$ if local robustness is proved; $-1$ if a \emph{counterfactual} is found, that is, 
an input $\mathbf{x}' \in B^\epsilon_\mathcal{A}(\mathbf{x})$ such that $y_f(\mathbf{x}') \neq y_f(\mathbf{x})$; 
and $0$ in case the verification is inconclusive. A \emph{complete} verifier $\overline{v} : \mathcal{Q} \rightarrow \{-1, 1\}$ never returns $0$, it always either verifies the query if it holds (returns $1$), or finds a counterfactual (returns $-1$).

Internally, a neural network verifier $v$ leverages an \emph{analyzer} $a: \mathcal{Q} \rightarrow \mathbb{R}$ which, given a robustness query $(\mathbf{x}, \mathcal{A}, \epsilon, f)$, computes lower and upper bound values of each  logit in $f(\mathbf{x})$ for any input vector $\mathbf{x}' \in B^\epsilon_\mathcal{A}(\mathbf{x})$. 
To do so it can leverage different abstractions such as \textsc{IBP}~\cite{Gehr18,Gowal19}, \textsc{CROWN} / \textsc{DeepPoly}~\cite{Zhang18,Singh19a}, or $\alpha$-\textsc{CROWN}~\cite{Xu21}, among others~\cite[etc.]{Li19,Muller22,Singh18,Singh19b,Wang18}.
Based on these bounds, $a$ returns the lower bound value of the \emph{worst-case logit difference}, the minimum difference between the logit corresponding to the true class $f(\mathbf{x})_{y_f(\mathbf{x})}$ and the logit corresponding to any other class: $\text{min}_{i \not= y_f(\mathbf{x})} (f(\mathbf{x})_{y_f(\mathbf{x})} - f(\mathbf{x})_{i})$. This lower bound is central to verification: if strictly positive, 
the logit corresponding to the true class remains larger than all others for every perturbed input vector in $B^\epsilon_\mathcal{A}(\mathbf{x})$, thus local robustness of $f$ on $x$ is verified (the verifier $v$ can return $1$).
%
%

\subparagraph*{Branch-and-Bound.} Complete neural network verifiers can be interpreted under the lens of the \emph{branch-and-bound} paradigm~\cite{Bunel2018}, explicitly adopted by state-of-the-art complete neural network verifiers (e.g.,~\cite{Wang21,Zhou24}), which recursively partitions the verification problem into more tractable subproblems.

\begin{algorithm}[t]
\caption{Branch-and-Bound}
\label{alg:bab}
\begin{algorithmic}[1]
\Function{BaB}{$a, (\mathbf{x}, \mathcal{A}, \epsilon, f)$} 
\Comment{$a\colon \hat{\mathcal{Q}} \rightarrow \mathbb{R}, (\mathbf{x}, \mathcal{A}, \epsilon, f)  \in \mathcal{Q}$}
    \State unresolved $\gets \{ (\mathbf{x}, \mathcal{A}, \epsilon, f, \emptyset)  \}$ \Comment{$(\mathbf{x}, \mathcal{A}, \epsilon, f,\emptyset)  \in \hat{\mathcal{Q}}$}\label{bab:init}
    \For{$Q \in$ unresolved}
        \If{$\Call{cex}{\mathbf{x}, \mathcal{A}, \epsilon, f}$}
	\Return $-1$\label{bab:cex}
	\EndIf
    	\State unresolved $\gets$ unresolved $\setminus\, \{ Q \}$
        \State result $\gets a(Q)$ \label{bab:bounding}\Comment{Bounding step}
	\If{result $\leq 0$} \label{bab:branching1}\Comment{Branching step}
	\State $Q_1, Q_2 \gets \Call{split}{Q}$\label{bab:split}
	\State unresolved $\gets$ unresolved $\cup\, \{Q_1, Q_2 \}$\label{bab:branching2}
	\EndIf
    \EndFor
    \State \Return $1$\label{bab:verified}
\EndFunction
\end{algorithmic}
\end{algorithm}

Let $\hat{\mathcal{Q}} \defined \{ (\mathbf{x}, \mathcal{A}, \epsilon, f, C) \mid (\mathbf{x}, \mathcal{A}, \epsilon, f) \in \mathcal{Q}, C \in \mathcal{P}(\mathcal{C})\}$ be an extension of $\mathcal{Q}$ where each robustness query $(\mathbf{x}, \mathcal{A}, \epsilon, f) \in \mathcal{Q}$ is augmented by a set of splitting constraints $C \in \mathcal{P}(\mathcal{C})$, where $\mathcal{C}$ denotes the set of all possible splitting constraints and $\mathcal{P}(\mathcal{C})$ denotes its powerset. 
We refer to elements of $\hat{\mathcal{Q}}$ as \emph{robustness subproblems}. 
We assume that the analyzer $a\colon Q \rightarrow \mathbb{R}$ naturally accommodates splitting constraints. Therefore, from now on, we assume that analyzers directly operate on subproblems: $a\colon \hat{Q} \rightarrow \mathbb{R}$.
For a subproblem $Q = (\mathbf{x}, \mathcal{A}, \epsilon, f, C) \in \hat{\mathcal{Q}}$, we write $Q_C$ to denote its associated constraint set $C$.
Branch-and-bound partitions a given robustness subproblem $Q \in \hat{\mathcal{Q}}$ by growing its set of constraints $Q_C$. 
This is typically done by either partitioning the set of allowed perturbations (input splitting~\cite{Anderson20,Wang18}), or activations: in the following, we focus on the latter, termed \emph{ReLU splitting}~\cite{Bunel20,DePalma21,Henriksen21} for the networks under consideration, owing to its higher efficacy on high-dimensional input feature spaces. 
That is, we assume that constraints in $\mathcal{C}$ determine the sign of the pre-activation value of a ReLU: let $x_{i,j} = \max(0, \hat{x}_{i,j})$ denote a ReLU at the $i$-th layer and $j$-index of a neural network classifier, where $\hat{x}_{i,j}$ represents the pre-activation value input to the ReLU; the constraint $\hat{x}_{i,j} \geq 0$ determines that the ReLU behaves as the identify function, while the constraint $\hat{x}_{i,j} < 0$ determines that the ReLU behaves as the constant function equal to zero. 

Branch-and-bound verification is illustrated by Algorithm~\ref{alg:bab}. The procedure maintains a list of unresolved robustness subproblems, initially containing the given robustness query augmented with an empty set of constraints (cf. Line~\ref{bab:init}). 
Each unresolved subproblem $Q$ spawns a counterexample search (cf. Line~\ref{bab:cex}). If the search is successful, a counterfactual is found and the procedure terminates immediately.
Otherwise, the subproblem $Q$ is given to the analyzer $a$ in the \emph{bounding step} (cf. Line~\ref{bab:bounding}). 
If $v(Q) \leq 0$, the verification is inconclusive and the algorithm performs the \emph{branching step} (cf. Line~\ref{bab:branching1}): the unresolved subproblem $Q$ is split into further subproblems $Q_1$ and $Q_2$ (cf. Line~\ref{bab:split}) which are added back to the list of unresolved subproblems (cf. Line~\ref{bab:branching2}). The splitting is done according to the partitioning strategy; in case of ReLU splitting, the branching refines the subproblem $Q = (\mathbf{x}, \mathcal{A}, \epsilon, f, C)$ by selecting a ReLU whose pre-activation value $\hat{x}_{i,j}$ is not yet constrained by $C$ and partitioning the search space according to its possible sign status, yielding $Q_1 = (\mathbf{x}, \mathcal{A}, \epsilon, f, C \cup \{ \hat{x}_{i,j} \geq 0 \})$ and $Q_2 = (\mathbf{x}, \mathcal{A}, \epsilon, f, C \cup \{ \hat{x}_{i,j} < 0 \})$. The branch-and-bound process continues until all unresolved subproblems have been resolved, in which case robustness is verified (cf. Line~\ref{bab:verified}).

\begin{algorithm}[t]
\caption{Branch-and-Bound with Timeout}
\label{alg:bab-timeout}
\begin{algorithmic}[1]
\Function{BaB}{$a, (\mathbf{x}, \mathcal{A}, \epsilon, f), \BoxedString{T}$} 
\Comment{$a\colon \hat{\mathcal{Q}} \rightarrow \mathbb{R}, (\mathbf{x}, \mathcal{A}, \epsilon, f)  \in \mathcal{Q}, T \in \mathbb{N}$}
\BeginBox
	\State  $t_1 \gets \Call{time}$ 
	\EndBox
    \State unresolved $\gets \{ (\mathbf{x}, \mathcal{A}, \epsilon, f,\emptyset)  \}$ \Comment{$(\mathbf{x}, \mathcal{A}, \epsilon, f, \emptyset)  \in \hat{\mathcal{Q}}$}
    \For{$Q \in$ unresolved}
            \If{$\Call{cex}{\mathbf{x}, \mathcal{A}, \epsilon, f}$}
	\Return $-1$
	\EndIf
    	\State unresolved $\gets$ unresolved $\setminus\, \{ Q \}$
        \State result $\gets a(Q)$ \Comment{Bounding step}
	\If{result $\leq 0$} \Comment{Branching step}
	\BeginBox
	\State $t_2 \gets \Call{time}$
	\If{$t_2 - t_1 < T$}
	\EndBox
		\State $Q_1, Q_2 \gets \Call{split}{Q}$
		\State unresolved $\gets$ unresolved $\cup\, \{Q_1, Q_2 \}$
		\BeginBox
	\Else
	~\Return $0$
	\EndBox
	\EndIf
	\EndIf
    \EndFor
    \State \Return $1$
\EndFunction
\end{algorithmic}
\end{algorithm}

Due to the NP-hardness of neural network verification~\cite{Katz17}, the number of subproblems generated by branch-and-bound can grow exponentially in the worst case. Therefore, practical implementations typically enforce a timeout, terminating the search and returning an inconclusive answer once the allocated verification time has elapsed (see Algorithm~\ref{alg:bab-timeout}, where the differences with respect to Algorithm~\ref{alg:bab} are highlighted in boxes). An alternative strategy is to cap the number of branching steps or the total processed subproblems~\cite{DePalma21}.

\subsection{Verified Explanations} \label{sec:verix}

\subparagraph*{Optimal Robust Explanations.}
Recent work~\cite{Ignatiev19b,MarquesSilva22b,MarquesSilva24} has established a tight connection between local robustness and explainability of machine learning classifiers. In particular, \emph{abductive explanations}~\cite{Ignatiev19a,MarquesSilva22a}, \emph{prime implicants}~\cite{Shih18}, and \emph{sufficient reasons}~\cite{Darwiche20,Darwiche23} characterize the minimal subset of input features that are responsible for a classifier prediction, in the sense that any perturbation on the rest of the input features will never change the classification output. Restricting perturbations to a set of allowed perturbations leads to \emph{distance-restricted} or \emph{robust explanations}~\cite{LaMalfa21,Wu23,Huang24}:

\begin{definition}[Robust Explanation]
	\label{def:robust-x}
	Given a classifier $f\colon \mathbb{R}^{d} \rightarrow \mathbb{R}^{k}$, an input vector $\mathbf{x} \in \mathbb{R}^d$, and a perturbation radius $\epsilon > 0$, a \emph{robust explanation} is a subset of the indexes of the input features $\mathcal{E}_{\mathbf{x}} \subseteq \{1, \dots, d\}$ such that $f$ is locally robust on $\mathbf{x}$ to perturbations in $B^\epsilon_{\overline{\mathcal{E}}_\mathbf{x}}(\mathbf{x})$:
\[
\forall \mathbf{x}'\in B^\epsilon_{\overline{\mathcal{E}}_\mathbf{x}}(\mathbf{x})\colon y_f (\mathbf{x}') = y_f (\mathbf{x})
\]
\end{definition}
Owing to Definition~\ref{def:robust-x}, we will call $\overline{\mathcal{E}}_{\mathbf{x}}$ the \emph{invariants} for $\mathbf{x}$.
The right-most column of Figure~\ref{fig:overview-standard} shows a robust explanation, which consists of the features for which a counterexample has been found (\tikz[baseline=-0.6ex]{\node[draw, fill=yellow!70!orange, minimum width=0.34cm, minimum height=0.18cm] {\scriptsize $\times$};}) or the verifier timed out (\tikz[baseline=-0.6ex]{\node[draw, fill=yellow!70!orange, minimum width=0.34cm, minimum height=0.18cm] {\scriptsize $?$};}), while the other features are the invariants (\tikz[baseline=-0.6ex]{\node[draw, fill=gray!60, minimum width=0.34cm, minimum height=0.18cm] {\scriptsize $\checkmark$};}).
Note that, the set of all input features indexes $\{ 1, \dots, d\}$ is a trivially robust explanation. More generally, many subsets of $\{ 1, \dots, d\}$ can constitute a robust explanation. Of particular interest is an explanation containing no superfluous features~\cite{Ignatiev19b,Wu23}: 

\begin{definition}[Optimal Explanation]
	\label{def:optimal-x}
	Given a classifier $f\colon \mathbb{R}^{d} \rightarrow \mathbb{R}^{k}$, an input vector $\mathbf{x} \in \mathbb{R}^d$, and a perturbation radius $\epsilon > 0$, a robust explanation $\mathcal{E}_{\mathbf{x}}$ is \emph{optimal} if removing an index from $\mathcal{E}_{\mathbf{x}}$ would break local robustness:
	\begin{equation}
		\label{eq:optimal-x}
		\forall\ i \in \mathcal{E}_{\mathbf{x}}\colon \exists\ \mathbf{x}^* \in B^\epsilon_{\overline{\mathcal{E}}_\mathbf{x} \cup \{ i\}}(\mathbf{x})\colon y_f (\mathbf{x}^*) \not= y_f (\mathbf{x})
	\end{equation}
\end{definition}

Note that, based on Definition~\ref{def:optimal-x}, establishing the optimality of a robust explanation $\mathcal{E}_{\mathbf{x}}$ entails obtaining a \emph{counterfactual} $\mathbf{x}^*$ witnessing Equation~\eqref{eq:optimal-x} 
for each feature in $\mathcal{E}_{\mathbf{x}}$.

\subparagraph*{Computing Robust Explanations.}
Existing algorithms for computing (optimal) robust explanations exploit a local robustness verifier as an \emph{oracle} for identifying invariants or finding counterfactuals. The simplest algorithms~\cite{Wu23,Huang24} operate sequentially, mimicking the deletion-based algorithm~\cite{Chinneck91} for computing minimal unsatisfiable subsets of logic formulas. {Others}~\cite{Izza24,Wu24} implement dichotomic search~\cite{Hemery06} or adapt the quickXplain algorithm~\cite{Junker04}.

Despite these advances, scalability remains a challenge: computing optimal explanation is possible for neural network classifiers with $20$-$60$ ReLUs~\cite{Wu23,Wu24}, while only robust but not necessarily optimal explanations can be computed for larger machine learning models with a large number of input features~\cite{Wu23,Izza24}. 
In particular, in our experiments, we found that finding counterfactuals rapidly becomes infeasible. 
For instance, 
computing robust explanations with a $60$-second timeout per robustness query on state-of-the-art models (with \textasciitilde$230k$ ReLUs) from the certified training literature~\cite{DePalma24} finds no counterexamples despite multiple hours of computation per image (cf. Table~\ref{table:definition-cnn7} in Section~\ref{sec:evaluation}).

\medskip

To bridge this gap between the \emph{ideal} notion of optimal robust explanations (Definition~\ref{def:optimal-x}) and its \emph{practical realizability}, we introduce in Section~\ref{sec:v-optimal-x} the concept of \emph{verifier-optimal robust explanations}, which explicitly takes into account the underlying verifier, thereby capturing what can be provably achieved in practice. Importantly, this definition enables finding counterfactuals even at larger scale.
Building on this notion, Section~\ref{sec:favex} presents an efficient algorithm for computing (verifier-optimal) robust explanations.

\section{Verifier-Optimal Robust Explanations}\label{sec:v-optimal-x}

The definition of optimal robust explanation (Definition~\ref{def:optimal-x}) implicitly requires that the underlying verifier is complete, i.e., always able to either verify invariance or find a counterfactual for a given robustness query. This makes the notion of optimal robust explanation inherently semantic: it characterizes what explanations \emph{should} be in principle, regardless of the limitations of practical verifiers.

\begin{figure}[b]
\begin{subfigure}{0.22\textwidth}
	\centering
	\includegraphics[width=.9\textwidth]{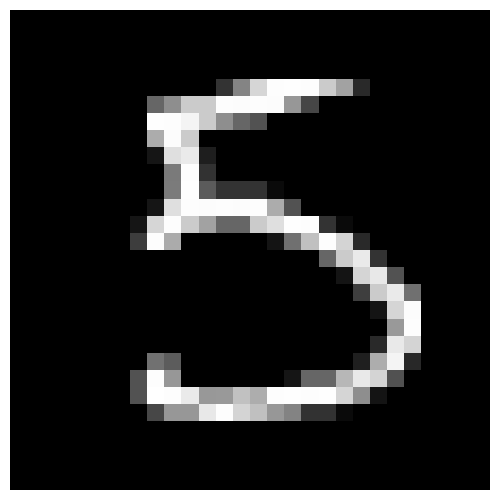}
	\captionsetup{justification=raggedright, labelsep=space, margin=6pt}
	\caption{\label{fig:MNIST-example} Sixteenth image of the MNIST~test~set.}
\end{subfigure}\hfill
\begin{subfigure}{0.22\textwidth}
	\centering
	\includegraphics[width=.9\textwidth]{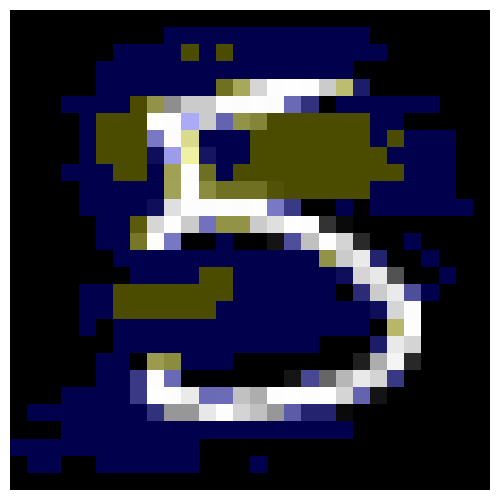}
	\captionsetup{justification=raggedright, labelsep=space, margin=6pt}
	\caption{\label{fig:MNIST-explanation} Explanation for image~(\subref{fig:MNIST-example}) at $\epsilon=0.25$.}
\end{subfigure}\hfill
\begin{subfigure}{0.22\textwidth}
	\centering
	\includegraphics[width=.9\textwidth]{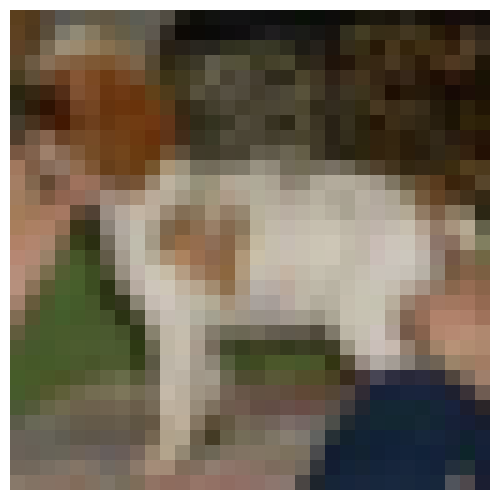}
	\captionsetup{justification=raggedright, labelsep=space, margin=6pt}
	\caption{\label{fig:CIFAR10-example} Thirteenth image of the CIFAR-10 test set.}
\end{subfigure}\hfill
\begin{subfigure}{0.22\textwidth}
	\centering
	\includegraphics[width=.9\textwidth]{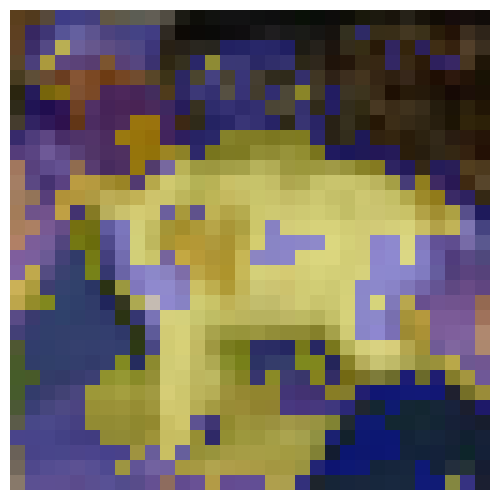}
	\captionsetup{justification=raggedright, labelsep=space, margin=6pt}
	\caption{\label{fig:CIFAR10-explanation} Explanation for image~(\subref{fig:CIFAR10-example}) at $\epsilon=\frac{16}{255}$.}
\end{subfigure}\hfill
\caption{Examples of $v$-optimal robust explanation computed on state-of-the-art networks from the certified training literature~\cite{DePalma24}.}
\label{fig:v-optimal-x}
\end{figure}
While this semantic notion provides the ideal target, in practice verifiers are incomplete and may fail to decide a robustness query. To account for such uncertainty, we introduce a \emph{practically achievable} counterpart of optimal robust explanations, that captures what can be established using a given verifier $v$. Specifically, our verifier-aware definition partitions the input feature indexes into three disjoint sets: 
\begin{enumerate}
\item the invariants $\mathcal{R}_{\mathbf{x}}$ (not highlighted in Figure~\ref{fig:v-optimal-x}): the feature indexes that are surely not part of the explanation, for which the verifier $v$ proves robustness (returns $1$);
\item the counterfactuals  $\mathcal{C}_{\mathbf{x}}$ (highlighted in yellow in Figure~\ref{fig:v-optimal-x}): the feature indexes for which the verifier $v$ finds a counterfactual (returns $-1$);
\item the unknowns $\mathcal{U}_{\mathbf{x}}$ (highlighted in blue in Figure~\ref{fig:v-optimal-x}): the feature indexes for which the verifier $v$ remains inconclusive (returns $0$);
\end{enumerate}
Intuitively, $\mathcal{R}_{\mathbf{x}}$ corresponds to input features that do not affect the classification outcome, while the union $\mathcal{C}_{\mathbf{x}} \cup \mathcal{U}_{\mathbf{x}}$ of the counterfactuals and the unknowns yields a \emph{hierarchical explanation} based on the ability of the given verifier $v$ to find counterfactuals.

We formally define our verifier-optimal explanations below.

\begin{definition}[Verifier-Optimal Robust Explanation]
	\label{def:v-optimal-x}
	Given a classifier $f : \mathbb{R}^{d} \rightarrow \mathbb{R}^{k}$, an input vector $\mathbf{x} \in \mathbb{R}^d$, a perturbation radius $\epsilon > 0$, and a verifier $v : \mathcal{Q} \rightarrow \{-1, 0, 1\}$,  a \emph{$v$-optimal robust explanation} is the union $\mathcal{C}_{\mathbf{x}} \cup \mathcal{U}_{\mathbf{x}}$ of disjoint subsets of the input feature indexes $\mathcal{C}_{\mathbf{x}} \subseteq \{1, \dots, d\}$ and $\mathcal{U}_{\mathbf{x}} \subseteq \{1, \dots, d\}$ such that:
	\begin{enumerate}
	\item $\mathcal{R}_{\mathbf{x}} \defined \overline{\mathcal{C}_{\mathbf{x}} \cup \mathcal{U}_{\mathbf{x}}}$ is the inclusion-maximal set of feature indexes such that the verifier $v$ proves local robustness of $f$ on $\mathbf{x}$ to perturbations in $B^\epsilon_{{\mathcal{R}}_\mathbf{x}}(\mathbf{x})$, i.e., $v((\mathbf{x}, \mathcal{R}_{\mathbf{x}} , \epsilon, f)) = 1$ and, for all $j  \in \mathcal{C}_{\mathbf{x}} \cup \mathcal{U}_{\mathbf{x}}$, $v((\mathbf{x}, \mathcal{R}_{\mathbf{x}} \cup \{ j \} , \epsilon, f)) \not= 1$;
	\item $v$ remains inconclusive when proving local robustness of $f$ on $\mathbf{x}$ to perturbations in $B^\epsilon_{\mathcal{R}_\mathbf{x} \cup \mathcal{U}_\mathbf{x}}(\mathbf{x})$, 
	i.e., $v((\mathbf{x}, \mathcal{R}_{\mathbf{x}} \cup \mathcal{U}_{\mathbf{x}}, \epsilon, f)) = 0$;
	\item the verifier $v$ always finds counterfactuals when proving local robustness of $f$ on $\mathbf{x}$ when perturbations are allowed on 
	all feature indexed by $\mathcal{R}_{\mathbf{x}} \cup \mathcal{U}_{\mathbf{x}}$ and \emph{any} feature indexed by $\mathcal{C}_{\mathbf{x}}$, i.e., $\forall\ i \in \mathcal{C}_{\mathbf{x}}\colon v((\mathbf{x}, \mathcal{R}_{\mathbf{x}} \cup \mathcal{U}_{\mathbf{x}} \cup \{i\}, \epsilon, f)) = -1$.
	\end{enumerate}
\end{definition}

The right-most column of Figure~\ref{fig:overview-favex} shows a verifier-optimal robust explanations, which consists of the counterfactuals in $\mathcal{C}_{\mathbf{x}}$ (\tikz[baseline=-0.6ex]{\node[draw, fill=yellow!70!orange, minimum width=0.34cm, minimum height=0.18cm] {\scriptsize $\times$};}) and the unknowns in $\mathcal{U}_{\mathbf{x}}$ (\tikz[baseline=-0.6ex]{\node[draw, fill=blue!40, minimum width=0.34cm, minimum height=0.18cm] {\scriptsize U};}), while the other features are the invariants in $\mathcal{R}_{\mathbf{x}}$ (\tikz[baseline=-0.6ex]{\node[draw, fill=gray!60, minimum width=0.34cm, minimum height=0.18cm] {\scriptsize $\checkmark$};}).

The first condition of Definition~\ref{def:v-optimal-x} implies that $\mathcal{U}_{\mathbf{x}} \cup \mathcal{C}_{\mathbf{x}}$ is a robust, yet in general not optimal, explanation. However, differently from Definitions~\ref{def:robust-x} and~\ref{def:optimal-x}, where perturbations are restricted to input features indexed by the invariants, in the second and third condition of Definition~\ref{def:v-optimal-x} perturbations are also allowed to all input features indexed by $\mathcal{U}_{\mathbf{x}}$. 
This increases the size of the perturbation space considered by the verifier, which in turn produces explanations that are \emph{potentially larger} than those obtained when considering only perturbations on the invariants. On the other hand, we argue that these explanations are \emph{more informative} since the input features indexed by $\mathcal{U}_{\mathbf{x}}$ do not produce counterfactuals individually, but counterfactuals found in combinations with these unknowns highlight clearly decisive input features for the classification outcome (the features indexed by $\mathcal{C}_{\mathbf{x}}$). In our experiments (cf. Tables~\ref{table:definition-cnn3} and~\ref{table:definition-cnn7} in Section~\ref{sec:evaluation}), 
we found that also allowing perturbations on all input features indexed by $\mathcal{U}_{\mathbf{x}}$
is crucial to enable finding counterfactuals. 
Note that, Definition~\ref{def:optimal-x} is an instance of Definition~\ref{def:v-optimal-x} in which the verifier is complete $\overline{v}\colon \mathcal{Q} \rightarrow \{-1, 1\}$: 

\begin{lemma}\label{lem}
Given a classifier $f : \mathbb{R}^{d} \rightarrow \mathbb{R}^{k}$, an input vector $\mathbf{x} \in \mathbb{R}^d$, a perturbation radius $\epsilon > 0$, and a \emph{complete} verifier $\overline{v} : \mathcal{Q} \rightarrow \{-1, 1\}$,  a \emph{$v$-optimal robust explanation} (Definition~\ref{def:v-optimal-x}) is an optimal robust explanation (Definition~\ref{def:optimal-x}).
\end{lemma}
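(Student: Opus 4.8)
The plan is to take an arbitrary $v$-optimal robust explanation $\mathcal{C}_{\mathbf{x}} \cup \mathcal{U}_{\mathbf{x}}$ with respect to the complete verifier $\overline{v}$ (with $\mathcal{R}_{\mathbf{x}} = \overline{\mathcal{C}_{\mathbf{x}} \cup \mathcal{U}_{\mathbf{x}}}$) and show that it satisfies both Definition~\ref{def:robust-x} and the optimality condition~\eqref{eq:optimal-x}, thereby exhibiting it as an optimal robust explanation with $\mathcal{E}_{\mathbf{x}} = \mathcal{C}_{\mathbf{x}} \cup \mathcal{U}_{\mathbf{x}}$. The whole argument hinges on first collapsing the three-way partition of Definition~\ref{def:v-optimal-x} into the two-way partition of Definition~\ref{def:optimal-x} by showing that the unknowns vanish.

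First I would establish that $\mathcal{U}_{\mathbf{x}} = \emptyset$. By hypothesis $\overline{v}$ is complete, so its codomain is $\{-1,1\}$ and it never returns $0$. Condition~2 of Definition~\ref{def:v-optimal-x} demands $\overline{v}((\mathbf{x}, \mathcal{R}_{\mathbf{x}} \cup \mathcal{U}_{\mathbf{x}}, \epsilon, f)) = 0$, which a complete verifier can never produce; the only way this requirement is consistent with $\overline{v}$ is for it to be vacuous, i.e.\ for there to be no unknowns at all. Hence $\mathcal{U}_{\mathbf{x}} = \emptyset$, $\mathcal{C}_{\mathbf{x}} \cup \mathcal{U}_{\mathbf{x}} = \mathcal{C}_{\mathbf{x}}$, and $\mathcal{R}_{\mathbf{x}} = \overline{\mathcal{C}_{\mathbf{x}}}$. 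From here on I set $\mathcal{E}_{\mathbf{x}} = \mathcal{C}_{\mathbf{x}}$, so that $\overline{\mathcal{E}}_{\mathbf{x}} = \mathcal{R}_{\mathbf{x}}$.

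Next I would verify the two defining properties. For robustness, condition~1 gives $\overline{v}((\mathbf{x}, \mathcal{R}_{\mathbf{x}}, \epsilon, f)) = 1$; by the soundness of the verifier's positive verdict (returning $1$ certifies local robustness, as fixed in Section~\ref{sec:verification}), this yields $y_f(\mathbf{x}') = y_f(\mathbf{x})$ for every $\mathbf{x}' \in B^\epsilon_{\mathcal{R}_\mathbf{x}}(\mathbf{x}) = B^\epsilon_{\overline{\mathcal{E}}_\mathbf{x}}(\mathbf{x})$, which is exactly Definition~\ref{def:robust-x}. For optimality, condition~3 gives, for every $i \in \mathcal{C}_{\mathbf{x}} = \mathcal{E}_{\mathbf{x}}$, that $\overline{v}((\mathbf{x}, \mathcal{R}_{\mathbf{x}} \cup \{i\}, \epsilon, f)) = -1$ (using $\mathcal{U}_{\mathbf{x}} = \emptyset$); since a $-1$ verdict means the verifier exhibits a counterfactual $\mathbf{x}^* \in B^\epsilon_{\mathcal{R}_\mathbf{x} \cup \{i\}}(\mathbf{x}) = B^\epsilon_{\overline{\mathcal{E}}_\mathbf{x} \cup \{i\}}(\mathbf{x})$ with $y_f(\mathbf{x}^*) \neq y_f(\mathbf{x})$, this is precisely the witness required by~\eqref{eq:optimal-x}. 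Combining the two properties shows that $\mathcal{E}_{\mathbf{x}}$ is an optimal robust explanation.

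The main obstacle is the $\mathcal{U}_{\mathbf{x}} = \emptyset$ step, and more precisely the delicate interaction between condition~2 and the restricted codomain $\{-1,1\}$ of $\overline{v}$: read literally, condition~2 can never be satisfied by a complete verifier, so the proof must make explicit the convention that the inconclusiveness requirement is vacuous in the absence of unknowns --- this is what justifies forcing $\mathcal{U}_{\mathbf{x}} = \emptyset$ rather than concluding that no $v$-optimal explanation exists. Once this is settled, the remaining two steps are a direct transcription of conditions~1 and~3 into the statements of Definitions~\ref{def:robust-x} and~\ref{def:optimal-x}, using only the soundness of the $1$ and $-1$ verdicts, and require no further work.
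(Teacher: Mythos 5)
Your proposal is correct and follows essentially the same route as the paper: completeness of $\overline{v}$ rules out the verdict $0$, so the inconclusiveness condition of Definition~\ref{def:v-optimal-x} can only hold vacuously, forcing $\mathcal{U}_{\mathbf{x}} = \emptyset$ and collapsing the definition to Definition~\ref{def:optimal-x}. Your explicit transcription of conditions~1 and~3 into Definitions~\ref{def:robust-x} and~\ref{def:optimal-x}, and your flagging of the vacuity reading of condition~2, merely spell out what the paper's one-line proof leaves implicit.
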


\begin{proof}
Since the verifier is \emph{complete}, it never returns $0$. Thus, the second condition of Definition~\ref{def:v-optimal-x} never occurs. Hence, $\mathcal{U}_{\mathbf{x}} = \emptyset$ and Definition~\ref{def:v-optimal-x} coincides with Definition~\ref{def:optimal-x}. 
\end{proof}

\section{FaVeX: Faster Verified Explanations}\label{sec:favex}

In this section, we introduce \textsc{FaVeX}, our efficient algorithm for computing (verifier-optimal) robust explanations. We first present the main algorithm (Section~\ref{subsec:favex}) and then describe two acceleration strategies: an approach to speed up verification by reusing prior branches in branch-and-bound verification (Section~\ref{subsec:reuse}), and a technique to speed up the search for counterfactuals by means of adversarial attacks in a restricted space (Section~\ref{subsec:attack}).

\subsection{Computing (Verifier-Optimal) Robust Explanations}\label{subsec:favex}

\begin{algorithm}[t]
\caption{\textsc{FaVeX}}
\label{alg:favex}
\begin{algorithmic}[1]
\Function{FaVeX}{$\textsc{v-opt}, f, \mathbf{x}, \epsilon, a, \vec{\mathcal{A}}, T$} \label{favex:input} \Comment{\textsc{v-opt} $\in \{\textsc{true}, \textsc{false}\}$}
\Statex \Comment{$f\colon \mathbb{R}^{d} \rightarrow \mathbb{R}^{k}$, $\mathbf{x} \in \mathbb{R}^d$, $\epsilon > 0$, $a\colon \hat{\mathcal{Q}} \rightarrow \mathbb{R}$, $\vec{\mathcal{A}} \in Sym(\{1, \dots, d\})$, $T \in \mathbb{N}$}
        \State $\mathcal{R}_{\mathbf{x}}, \mathcal{U}_{\mathbf{x}}, \mathcal{C}_{\mathbf{x}} \gets \emptyset, \emptyset, \emptyset$ \label{favex:init}
        \State $\text{fallback} \gets \textsc{false}$
        \State $\text{batches} \gets \{ \vec{\mathcal{A}} \}$ \label{favex:all}
    \For{$B \in$ batches}
    	\State batches $\gets$ batches $\setminus\, \{ B \}$
	\If{$| B | > 1$} \label{favex:batch}	\Comment{batch robustness query}
	\If{fallback} \label{favex:triggered} \Comment{fallback to sequential processing}
	\For{$i \in B$} \label{favex:single-for1}
		
		\BeginBox
		\If{\textsc{v-opt}}
		\State result $\gets \Call{BaB}{a, (\mathbf{x}, \mathcal{R}_{\mathbf{x}} \cup \mathcal{U}_{\mathbf{x}} \cup \{ i \}, \epsilon, f), T}$ \label{favex:single-for-bab}
		\Else ~result $\gets \Call{BaB}{a, (\mathbf{x}, \mathcal{R}_{\mathbf{x}} \cup \{ i \}, \epsilon, f), T}$ \label{favex:single-for-bab-std}
		\EndBox
		\EndIf
		
		\If{result $== 1$} $\mathcal{R}_{\mathbf{x}} \gets \mathcal{R}_{\mathbf{x}} \cup \{i\}$ \label{favex:r1}	
		\Else 
		~\textbf{if} result = -1 \textbf{then}  $\mathcal{C}_{\mathbf{x}} \gets \mathcal{C}_{\mathbf{x}} \cup \{i\}$ \textbf{else} $\mathcal{U}_{\mathbf{x}} \gets \mathcal{U}_{\mathbf{x}} \cup \{i\}$ \label{favex:cu1}
		\EndIf
	\EndFor \label{favex:single-for2}
	\Else \Comment{binary search-based batch processing}
	\BeginBox
	\If{\textsc{v-opt}}
	\State result $\gets \Call{BaB}{a, (\mathbf{x}, \mathcal{R}_{\mathbf{x}} \cup \mathcal{U}_{\mathbf{x}} \cup B, \epsilon, f), T/10}$ \label{favex:batch-bab}
	\Else ~result $\gets \Call{BaB}{a, (\mathbf{x}, \mathcal{R}_{\mathbf{x}} \cup B, \epsilon, f), T/10}$ \label{favex:batch-bab-std}
	\EndBox
	\EndIf
	\If{result $== 1$} $\mathcal{R}_{\mathbf{x}} \gets \mathcal{R}_{\mathbf{x}} \cup B$	\label{favex:r2} 
	\Else \label{favex:batch-fail}
	\State $\vec{\mathcal{A}}_1, \vec{\mathcal{A}}_2 \gets \Call{halve}{\vec{\mathcal{A}}}$ \label{favex:halve}
	\State batches $\gets$ batches $\cup \,\{ \vec{\mathcal{A}}_1, \vec{\mathcal{A}}_2 \}$ \label{favex:binary-search}
	\EndIf
	\EndIf
	\Else		\label{favex:single} \Comment{single robustness query}
	
			\BeginBox
			\If{\textsc{v-opt}}
	\State result $\gets \Call{BaB}{a, (\mathbf{x}, \mathcal{R}_{\mathbf{x}} \cup \mathcal{U}_{\mathbf{x}} \cup B, \epsilon, f), T}$ \label{favex:single-bab}
	\Else ~result $\gets \Call{BaB}{a, (\mathbf{x}, \mathcal{R}_{\mathbf{x}} \cup B, \epsilon, f), T}$ \label{favex:single-bab-std}
	\EndBox
	\EndIf
	\If{result $== 1$} $\mathcal{R}_{\mathbf{x}} \gets \mathcal{R}_{\mathbf{x}} \cup B$	\label{favex:r3} 
	\Else \label{favex:single-fail} 
	\State fallback $\gets$ \textsc{true} \label{favex:fallback}
	\State \textbf{if} result = -1 \textbf{then}  $\mathcal{C}_{\mathbf{x}} \gets \mathcal{C}_{\mathbf{x}} \cup B$ \textbf{else} $\mathcal{U}_{\mathbf{x}} \gets \mathcal{U}_{\mathbf{x}} \cup B$ \label{favex:cu2}
	\EndIf
	\EndIf
    \EndFor     
        \State \Return $\mathcal{U}_{\mathbf{x}}$, $\mathcal{C}_{\mathbf{x}}$ \label{favex:return}
\EndFunction
\end{algorithmic}
\end{algorithm}

\textsc{FaVeX}, shown in Algorithm~\ref{alg:favex}, computes $v$-optimal robust explanations if the \textsc{v-opt} flag (cf. Line~\ref{favex:input}) is \textsc{true}, otherwise it computes robust explanations (cf. the parts highlighted in boxes in Algorithm~\ref{alg:favex}). It maintains three sets $\mathcal{R}_{\mathbf{x}}$, $\mathcal{U}_{\mathbf{x}}$, and $\mathcal{C}_{\mathbf{x}}$, containing invariants, unknowns, and counterfactuals, all initially empty (cf. Line~\ref{favex:init}). It distinguishes between two kinds of robustness queries: \emph{single queries}, 
in which perturbations are allowed on 
only one additional input feature besides the invariants in $\mathcal{R}_{\mathbf{x}}$ (cf. Lines~\ref{favex:single-for-bab-std} and~\ref{favex:single-bab-std})
and, if the \textsc{v-opt} flag is \textsc{true}, the unknowns in $\mathcal{U}_{\mathbf{x}}$ (cf. Lines~\ref{favex:single-for-bab} and~\ref{favex:single-bab}); and \emph{batch queries}, 
in which perturbations are allowed on 
multiple additional features simultaneously (cf. Line~\ref{favex:batch-bab} and~\ref{favex:batch-bab-std}). 
For batch queries, the timeout $T$ enforced on the underlying chosen verifier $v$ (cf. Line~\ref{favex:input}) is reduced by a factor of $10$ (cf. Line~\ref{favex:batch-bab} and~\ref{favex:batch-bab-std}). 
The reduction factor can be a parameter of the algorithm but, in our experiments, we found this value to be effective in practice.

Similarly to~\cite{Wu24}, batch queries accelerate the identification of consecutive invariants by querying the verifier for multiple candidate input features at once, instead of making consecutive separate single queries for each of them. The search for batches of consecutive invariants is based on \emph{binary search}. \textsc{FaVeX} takes as input a \emph{traversal strategy} $\vec{\mathcal{A}}$, i.e., an ordered sequence of all the input feature indexes (cf. Line~\ref{favex:input}). The choice of traversal strategy matters for producing smaller explanations~\cite{Wu23,Wu24}. We present our choice of traversal strategy later in this section. The first batch query allows perturbations on all input features indexed by $\vec{\mathcal{A}}$ (cf. Line~\ref{favex:all}). This query typically fails to verify (cf. Line~\ref{favex:batch-fail}), so $\vec{\mathcal{A}}$ is halved into two batches $\vec{\mathcal{A}}_1$ and $\vec{\mathcal{A}}_2$ (cf. Line~\ref{favex:halve}) and binary search continues over them (cf. Line~\ref{favex:binary-search}).

When the batch size is down to one (cf. Line~\ref{favex:single}), batch queries become de facto single queries, and they trigger a fallback to sequential feature processing (cf. Line~\ref{favex:fallback}) if the verifier disproves local robustness of $f$ on $\mathbf{x}$ or times out (cf. Line~\ref{favex:single-fail}). Once this occurs, unlike~\cite{Wu24}, we do not resume binary search over the next feature batches.
%
In practice, we found that
once batch queries have shrunk to size one and the verifier fails at this granularity, the likelihood of encountering further batches of consecutive invariants drops substantially. At this stage, continuing binary search adds overhead without providing benefits: it repeatedly queries increasingly small batches that are unlikely to yield new invariants, thereby worsening overall performance (cf. Tables~\ref{table:time-fc-10x2}-\ref{table:time-cnn7} in Section~\ref{sec:evaluation}). For this reason, after the fallback is triggered (cf. Line~\ref{favex:triggered}), \textsc{FaVeX} switches entirely to sequential feature processing (cf. Lines~\ref{favex:single-for1}-\ref{favex:single-for2}).

The sets $\mathcal{R}_{\mathbf{x}}$, $\mathcal{U}_{\mathbf{x}}$, and $\mathcal{C}_{\mathbf{x}}$ are updated based on the result returned by the chosen verifier $v$: $\mathcal{R}_{\mathbf{x}}$ is grown if verification succeeds ($v$ returns $1$, cf. Lines~\ref{favex:r1},~\ref{favex:r2}, and~\ref{favex:r3}); otherwise, $\mathcal{C}_{\mathbf{x}}$ or $\mathcal{U}_{\mathbf{x}}$ are grown if counterfactuals are found ($v$ returns $-1$) or if $v$ times out ($v$ returns $0$), respectively (cf. Lines~\ref{favex:cu1} and~\ref{favex:cu2}).
Finally \textsc{FaVeX} returns $\mathcal{U}_{\mathbf{x}}$ and $\mathcal{C}_{\mathbf{x}}$ (cf. Line~\ref{favex:return}), whose union $\mathcal{U}_{\mathbf{x}} \cup \mathcal{C}_{\mathbf{x}}$ yields a $v$-optimal robust explanation  (cf. Definition~\ref{def:v-optimal-x}), if the \textsc{v-opt} flag is \textsc{true}, and a robust explanation (cf. Definition~\ref{def:robust-x}), otherwise.

\begin{theorem}
Given a classifier $f : \mathbb{R}^{d} \rightarrow \mathbb{R}^{k}$, an input vector $\mathbf{x} \in \mathbb{R}^d$, a perturbation radius $\epsilon > 0$, and a verifier $v : Q \to \{-1,0,1\}$, \textsc{FaVeX} (Algorithm~\ref{alg:favex}) always terminates and
returns disjoint sets $\mathcal{U}_\mathbf{x}, \mathcal{C}_\mathbf{x} \subseteq \{1,\dots,d\}$ such that $\mathcal{C}_\mathbf{x} \cup \mathcal{U}_\mathbf{x}$
is a $v$-optimal robust explanation of $\mathbf{x}$ (Definition~\ref{def:v-optimal-x}), if the \textsc{v-opt} flag is \textsc{true}, or a robust explanations of $\mathbf{x}$ (Definition~\ref{def:robust-x}), if the \textsc{v-opt} flag is \textsc{false}.
\end{theorem}

\begin{proof}
\emph{(Termination)}
Algorithm~\ref{alg:favex} processes a finite set of input feature indexes $\{1,\dots,d\}$ using a combination of
batch splitting and sequential queries. Each step strictly reduces the size of
unprocessed batches or classifies a feature index into $\mathcal{R}_\mathbf{x}$, $\mathcal{U}_\mathbf{x}$, or $\mathcal{C}_\mathbf{x}$. Since no feature index is revisited indefinitely and each verifier call terminates (in the worst case due to timeout), the algorithm performs finitely many queries and thus terminates.

\emph{(Correctness)}
Algorithm~\ref{alg:favex} maintains a partition of the features indexes $\{1,\dots,d\}$ into invariants $\mathcal{R}_\mathbf{x}$, unknowns $\mathcal{U}_\mathbf{x}$, and
counterfactuals $\mathcal{C}_\mathbf{x}$, updated according to the verifier outcome: $v=1$ adds to $\mathcal{R}_\mathbf{x}$, $v=0$ to $\mathcal{U}_\mathbf{x}$, and $v=-1$ to $\mathcal{C}_\mathbf{x}$, with robustness queries performed over $\mathcal{R}_\mathbf{x} \cup \mathcal{U}_\mathbf{x}$, when \textsc{v-opt} is \textsc{true}, or over $\mathcal{R}_\mathbf{x}$, \textsc{v-opt} is \textsc{false}.
By construction, the resulting partition
satisfies the three conditions of Definition~\ref{def:v-optimal-x}, when \textsc{v-opt} flag is \textsc{true}, or the condition of Definition~\ref{def:robust-x}, when \textsc{v-opt} is \textsc{false}.
\end{proof}

\subparagraph*{Traversal Strategy.} 
\begin{algorithm}[t]
\caption{\textsc{FaVeX} Traversal Strategy}
\label{alg:traversal}
\begin{algorithmic}[1]
\Function{FaVeX-traversal}{$f, \mathbf{x}, \epsilon, a$} \label{traversal:input}
\Comment{$f\colon \mathbb{R}^{d} \rightarrow \mathbb{R}^{k}$, $\mathbf{x} \in \mathbb{R}^d$, $\epsilon > 0$, $a\colon \mathcal{Q} \rightarrow \mathbb{R}$}
\State $S \gets (0, \dots, 0)$ \Comment{$S \in \mathbb{R}^d$}
\For{$i \in \{1, \dots, d\}$} \label{traversal:for}
	\State $S_i \gets a( (\mathbf{x}, \{ i \}, \epsilon, f) )$\label{traversal:score}
\EndFor
\State \Return $\Call{argsort}{S, \textsc{descending}}$ \label{traversal:return}
\EndFunction
\end{algorithmic}
\end{algorithm}
The traversal strategy used by \textsc{FaVeX} is shown in Algorithm~\ref{alg:traversal}. It leverages an analyzer $a$ (cf. Line~\ref{traversal:input}) to associate a \emph{score} to each input feature index (cf. Line~\ref{traversal:for}). The score is the lower bound of the worst-case logit difference computed by $a$, when the perturbations are restricted to the currently considered input feature index $i$ (cf. Line~\ref{traversal:score}). The input feature indexes are then sorted in descending order according to their score (cf. Line~\ref{traversal:return}). The underlying intuition is that input feature indexes associated with a larger lower bound on the worst-case logit difference are more likely to be found invariants for the explanation by \textsc{FaVeX}. Note that our traversal strategy is similar to the one used in \textsc{VeriX+}~\cite{Wu24}, with a subtle but important difference: the score associated to each input feature index by \textsc{VeriX+}~\cite{Wu24} is the lower bound on the logit of the true classification output instead of the worst-case logit difference. 
We argue that prioritizing indexes by their contribution to the worst-case logit difference is more faithful to the verification objective: it directly reflects the margin that must remain positive under perturbations, rather than focusing solely on the robustness of the true class logit without accounting for competing classes. This leads to a traversal strategy that better aligns with the decision criterion used by the verifier and therefore improves the likelihood of identifying invariants early.
%
%
An empirical comparison of different traversal strategies is shown in Tables \ref{table:traversal-fc-10x2}-\ref{table:traversal-cnn7} in Section~\ref{sec:evaluation}.


\subsection{Incremental Branch-and-Bound Verification}\label{subsec:reuse}

We can now remark that most often an invocation of the verifier $v$ within \textsc{FaVeX} concerns a robustness query $(\textbf{x}, \mathcal{A}, \epsilon, f)$ that differs only slightly from the previous one (typically $\mathcal{A}$ includes only one or, sometimes, few more input feature indexes). Instead of running $v$ from scratch, we propose a strategy that exploits this incremental query evolution to reuse previous verifier computations and, in turn, speed up the overall procedure.

\begin{figure}[b]
\centering
\begin{tikzpicture}[scale=0.8, sibling distance=19em, level distance=4.5em,
    every node/.style = {shape=rectangle, rounded corners,
        draw=blue!50, align=center, very thick, fill=blue!10},
    leaf/.style = {draw=green!75!black, rounded corners=0pt, fill=green!10}]
    \node (root) {$(\mathbf{x}, \mathcal{A}, 0.6, f,\emptyset)$}
        child { 
            node[leaf] (leftleaf) {$(\mathbf{x}, \mathcal{A}, 0.6, f,\{ \hat{x}_{13} \geq 0 \})$} 
            edge from parent[draw] node[left, draw=none, fill=none] {$\hat{x}_{13} \geq 0~$}
        }
        child { 
            node (rightnode) {$(\mathbf{x}, \mathcal{A}, 0.6, f,\{ \hat{x}_{13} < 0 \})$}
            child { 
                node[leaf] (bottomleft) {$(\mathbf{x}, \mathcal{A}, 0.6, f,\{ \hat{x}_{13} < 0 , \hat{x}_{11} \geq 0 \})$} 
                edge from parent[draw] node[left, draw=none, fill=none] {$\hat{x}_{11} \geq 0~$} 
            }
            child { 
                node[leaf] (bottomright) {$(\mathbf{x}, \mathcal{A}, 0.6, f,\{ \hat{x}_{13} < 0 , \hat{x}_{11} < 0 \})$} 
                edge from parent[draw] node[right, draw=none, fill=none] {$~\hat{x}_{11} < 0$} 
            }
            edge from parent[draw] node[right, draw=none, fill=none] {$~\hat{x}_{13} < 0$}
        };
    \node[above=0em of root, draw=none, fill=none] {\textcolor{gray}{$Q$}};
    \node[left=0em of leftleaf, draw=none, fill=none] {\textcolor{gray}{$Q_1$}};
    \node[below=0em of bottomleft, draw=none, fill=none] {\textcolor{gray}{$Q^1_2$}};
    \node[below=0em of bottomright, draw=none, fill=none] {\textcolor{gray}{$Q^2_2$}};
    \node[right=0em of rightnode, draw=none, fill=none] {\textcolor{gray}{$Q_2$}};
\end{tikzpicture}
\caption{Example of search tree explored by branch-and-bound during the analysis illustrated in Figure~\ref{fig:overview-standard}, where $f$ is the neural network classifier shown in Figure~\ref{fig:bcw}, and $\mathcal{A} = \{1, 2, 3, 4, 5, 8\}$.}\label{fig:tree}
\end{figure}

Specifically, we observed that branch-and-bound may perform similar sequences of branching steps for consecutive robustness queries. Thus, we propose to save and reuse the branching steps in later branch-and-bound calls. Concretely, inspired by \textsc{ivan}~\cite{Ugare23}, we cache the (constraints associated with the) robustness subproblems at the \emph{leaves} of the search tree explored by branch-and-bound, and reuse them as starting point for subsequent branch-and-bound calls. Figure~\ref{fig:tree} shows an example of branch-and-bound search tree 
(that occurs during the analysis in Figure~\ref{fig:overview-standard})
where the initial subproblem $Q$ (local robustness verification with $0.6$-bounded $\ell_\infty$ perturbation applied to the input features $x_1$, $x_2$, $x_3$, $x_4$, $x_5$, and $x_8$) is first split into subproblems $Q_1$ and $Q_2$ (adding the constraints $\hat{x}_{13} \geq 0$ and $\hat{x}_{13} < 0$, respectively), and then $Q_2$ is further split into $Q^1_2$ and $Q^2_2$ (adding $\hat{x}_{11} \geq 0$ and $\hat{x}_{11} < 0$, respectively). In this case, we cache the (constraints associated with) subproblems $Q_1$, $Q^1_2$ and $Q^2_2$. The subsequent invocation of the verifier (local robustness verification with $0.6$-bounded $\ell_\infty$ perturbation applied to the input features $x_1$, $x_2$, $x_3$, $x_4$, $x_5$, $x_8$, and $x_9$) will reuse these subproblems (constraints) as a starting point for branch-and-bound, instead of starting from scratch with the robustness query (with an empty set of constraints).

\begin{algorithm}[t]
\caption{Branch-and-Bound (with Timeout and) Branching Save/Reuse}
\label{alg:bab-reuse}
\begin{algorithmic}[1]
\Function{BaB}{$a, (\mathbf{x}, \mathcal{A}, \epsilon, f), T, \mathbb{C}$} 
\Comment{$a\colon \hat{\mathcal{Q}} \rightarrow \mathbb{R}, (\mathbf{x}, \mathcal{A}, \epsilon, f)  \in \mathcal{Q}, T \in \mathbb{N}, \mathbb{C} \in \mathcal{P}(\mathcal{P}(\mathcal{C}))$}
	\State  $t_1 \gets \Call{time}$ 
	\BeginBox
	\For{$C \in \mathbb{C}$}  unresolved $\gets \{ (\mathbf{x}, \mathcal{A}, \epsilon, f, C)  \}$ \Comment{Reuse} \label{reuse:reuse}
    \EndBox
    \EndFor
    \State $\mathbb{C} \gets \emptyset$ 
    \For{$Q \in$ unresolved}
            \If{$\Call{cex}{\mathbf{x}, \mathcal{A}, \epsilon, f}$} \label{reuse:cex}
            \BeginBox
                    \For{$Q' \in$ unresolved} 
        $\mathbb{C} \gets \mathbb{C} \cup \{ Q'_C \} $ \Comment{Save \textnormal{unresolved}} \label{reuse:-1}
        \EndBox
        \EndFor 
	\State \Return $-1$, $\mathbb{C}$ \label{reuse:ret-1}
	\EndIf
    	\State unresolved $\gets$ unresolved $\setminus\, \{ Q \}$
        \BeginBox
        \If{a(Q) $> 0$} $\mathbb{C} \gets \mathbb{C} \cup \{ Q_C \} $ \Comment{Save $Q$} \label{reuse:1}
        \EndBox
        \Else 
	\State $t_2 \gets \Call{time}$
	\If{$t_2 - t_1 < T$}
		\State $Q_1, Q_2 \gets \Call{split}{Q}$
		\State unresolved $\gets$ unresolved $\cup\, \{Q_1, Q_2 \}$
	\Else \label{reuse:timeout}
	            \BeginBox
                    \For{$Q' \in$ unresolved} 
        $\mathbb{C} \gets \mathbb{C} \cup \{ Q'_C \} $ \Comment{Save \textnormal{unresolved}} \label{reuse:0}
        \EndBox
        \EndFor 
	\State \Return $0$, $\mathbb{C}$ \label{reuse:ret0}
	\EndIf
	\EndIf
    \EndFor
    \State \Return $1$, $\mathbb{C}$ \label{reuse:ret1}
\EndFunction
\end{algorithmic}
\end{algorithm}

Algorithm~\ref{alg:bab-reuse} shows how we augment branch-and-bound with save and reuse capabilities. When a subproblem $Q$ is verified, we cache the set of constraints $Q_C$ associated with $Q$ (cf. Line~\ref{reuse:1}). Otherwise, if a counterfactual is found (cf. Line~\ref{reuse:cex}) or the timeout is hit (cf. Line~\ref{reuse:timeout}, we cache the constraints of all still unresolved subproblems (cf. Lines~\ref{reuse:-1} and~\ref{reuse:0}). The set of cached constraints is returned together with the verification result (cf. Lines~\ref{reuse:ret-1},~\ref{reuse:ret0},~\ref{reuse:ret1}) to be reused by subsequent calls to branch-and-bound (cf. Line~\ref{reuse:reuse}).

In practice, we do not apply this save-and-reuse strategy indiscriminately. While in general it substantially accelerates verification, in some cases, it can be detrimental, e.g., when the number of cached constraints sets becomes too large. To avoid these pitfalls, we enable reuse only under specific conditions 
and impose (a configurable) limit on the size of the cache. Our practical design choices and heuristics governing when to save and when to reuse constraints of branch-and-bound leaf subproblems are detailed in Section~\ref{sec:impl-favex}.

\subsection{Restricted-Space Counterfactual Search}\label{subsec:attack}

The search for counterfactuals during branch-and-bound verification of a robustness query $(\mathbf{x}, \mathcal{A}, \epsilon, f)$ (cf., e.g., Line~\ref{bab:cex} in Algorithm~\ref{alg:bab}) typically considers the entire set of allowed perturbations $B^\epsilon_{\mathcal{A}}(\mathbf{x})$. That is, all input features indexed by $\mathcal{A}$ are treated as freely perturbable, and thus the entire set $\mathcal{A}$ constitutes the counterfactuals \emph{search space}.

However, we can again leverage the dependency between subsequent verification queries in \textsc{FaVeX} (as in Section~\ref{subsec:reuse}), in this case to narrow the counterfactual search space, thereby reducing the search cost.
When \textsc{FaVeX} operates in sequential query processing, in particular, input feature indexes are individually and iteratively added to set of feature indexes to which perturbations are allowed. If we have access to a counterfactual $\mathbf{x}' \in B^\epsilon_{{\mathcal{A}}}(\mathbf{x})$ for a robustness query $(\mathbf{x}, \mathcal{A}, \epsilon, f)$, it is extremely likely that $x'$ will be very close to a counterfactual for the next robustness query $(\mathbf{x}, \mathcal{A} \cup B, \epsilon, f)$, in the larger set of allowed perturbations $B^\epsilon_{{\mathcal{A} \cup B}}(\mathbf{x})$. 

\begin{figure}[t]
\centering
\begin{tikzpicture}[scale=1.0]

\begin{scope}[shift={(-3.5,0)}]
    \filldraw[blue!30, fill opacity=0.15, draw=blue!50, thick]
        (-0.4,-0.4) rectangle (1.6,1.6);

    \filldraw[blue!30, fill opacity=0.12, draw=blue!50, thick]
        (-1,-1) -- (-0.4,-0.4) -- (-0.4,1.6) -- (-1,1) -- cycle;

    \filldraw[blue!30, fill opacity=0.12, draw=blue!50, thick]
        (-1,-1) -- (1,-1) -- (1.6,-0.4) -- (-0.4,-0.4) -- cycle;

    \filldraw[blue!30, fill opacity=0.12, draw=blue!50, thick]
        (1,-1) -- (1.6,-0.4) -- (1.6,1.6) -- (1,1) -- cycle;

    \filldraw[blue!30, fill opacity=0.12, draw=blue!50, thick]
        (-1,1) -- (1,1) -- (1.6,1.6) -- (-0.4,1.6) -- cycle;

    \node at (0,2.35) {\small \textbf{Full-Space Search}};
    \node at (0,2) {\scriptsize $B^\epsilon_{\mathcal{A} \cup B}(\mathbf{x})$};

    \draw[->, gray!70] (0.3,0.3) -- (1.25,0.55);
    \draw[->, gray!40] (0.3,0.3) -- (-0.9,0.5);
    \draw[->, gray!40] (0.3,0.3) -- (0.8,-0.7);

    \filldraw[black] (0.3,0.3) circle (2pt);
    \node[below] at (0.3,0.3) {\scriptsize $\mathbf{x}$};

    \filldraw[red] (1.25,0.55) circle (2pt);
    \node[right] at (1.25,0.55) {\scriptsize $\mathbf{x}'$};

    \filldraw[blue!30, fill opacity=0.20, draw=blue!50, thick]
        (-1,-1) rectangle (1,1);

\end{scope}

\begin{scope}[shift={(3.5,0)}]

    \draw[draw=blue!20] (-0.4,-0.4) rectangle (1.6,1.6);
    \draw[blue!20] (-1,-1) -- (-0.4,-0.4);
    \draw[blue!20] (1,-1) -- (1.6,-0.4);
    \draw[blue!20] (-1,1) -- (-0.4,1.6);

    \node at (0,2.35) {\small \textbf{Restricted-Space Search}};
    \node at (0,2) {\scriptsize subset of $B^\epsilon_{\mathcal{A}\cup B}(\mathbf{x})$};

    \coordinate (z) at (0.8,0.9);

    \draw[fill=green!10, draw=green!85!black, thick]
        ($(z) + (-0.8,-0.4)$) rectangle ($(z) + (0.8,0.4)$);
            
        \draw[draw=blue!20] (-1,-1) rectangle (1,1);
            \draw[blue!20] (1,1) -- (1.6,1.6);

    \filldraw[black] (0.3,0.3) circle (2pt);
    \node[below] at (0.3,0.3) {\scriptsize $\mathbf{x}$};

    \filldraw[orange!80!black] (z) circle (2pt);
    \node[left] at (z) {\scriptsize $\mathbf{z}$};

    \draw[->, gray!70] (z) -- (1.155,0.655);

    \filldraw[red] (1.2,0.65) circle (2pt);
    \node[right] at (1.2,0.65) {\scriptsize $\mathbf{x}'$};
\end{scope}

\end{tikzpicture}
\caption{Illustration of full-space vs.\ restricted-space counterfactual search. 
Full-space search (left) explores the entire perturbation region $B^\epsilon_{\mathcal{A} \cup B}(\mathbf{x})$. 
Restricted-space search (right) begins from the output $\mathbf{z}$ of the previous search 
(which may not be a counterfactual) and varies only the newly added feature(s), 
thus exploring a much smaller subset of $B^\epsilon_{\mathcal{A}\cup B}(\mathbf{x})$.}
\label{fig:attack}
\end{figure}

This motivates our \emph{restricted-space counterfactual search}. Concretely, when solving the next robustness query $(\mathbf{x}, \mathcal{A} \cup B, \epsilon, f)$, instead of conducting the counterfactual search over the entire search space $\mathcal{A} \cup B$ (left of Figure~\ref{fig:attack}), we restrict it to only the newly-added input feature indexes in $B$ (right of Figure~\ref{fig:attack}). At the same time, we center the search space around the result $\mathbf{z} \in B^\epsilon_{{\mathcal{A}}}(\mathbf{x})$ of the previous counterfactual search, which is a point that was heuristically selected to be as close as possible to misclassification. That is, we fix the value of each input feature $i$ indexed by $\mathcal{A}$ to $\mathbf{z}_i$. This way, the counterfactual search explores only a thin slice of $\mathcal{A} \cup B$, but one that is highly promising. In practice, 
we found this heuristic to be extremely effective for finding valid counterfactuals quickly (cf. Tables~\ref{table:time-cnn3} and~\ref{table:time-cnn7} in Section~\ref{sec:evaluation}). Implementation details are provided in Section~\ref{sec:impl-favex}.


\section{Implementation}\label{sec:impl}

Our implementation\footnote{\url{https://github.com/alessandrodepalma/favex}} is based on the popular deep learning library \textsc{PyTorch}~\cite{Paszke2019}.

\subsection{Verifier} \label{sec:impl-verifier}
We adopt the \textsc{OVAL} branch-and-bound framework~\cite{Bunel2018,Bunel2020,DePalma21,sparsealgosDePalma2024} as the backbone for our verifier, using a fixed timeout of either $300$ seconds per query, or $60$ seconds, depending on the benchmark.
Note that while branch-and-bound is complete in principle, the timeout makes it incomplete in practice (cf. end of Section~\ref{sec:verification}), as it is unlikely that the exponential worst-case runtime will be hit during the allotted timeout on non-trivial networks~\cite{Katz17}.

\subparagraph*{Bounding Phase.} 

The bounding phase of branch-and-bound requires two components: one tries to prove robustness by operating on a network over-approximation. If the over-approximation is robust, then no counterexampe can exist. This corresponds to computing a lower bound on the worst-case logit difference (cf. Section~\ref{sec:verification}).
We mainly rely on a popular dual-based algorithm, named $\alpha$-$\beta$-CROWN~\cite{Wang21}, that solves a dual instance of a network relaxation that replaces the ReLU activations by over-approximating triangles~\cite{Ehlers2017}.  
This algorithm is both employed to bound the logit differences themselves, and to build each network pre-activation (a necessary preliminary step). In line with previous work, we only compute pre-activation bounds once at the branch-and-bound root (i.e., before any splitting)~\cite{DePalma21}.
On smaller networks, we employ a framework configuration that may resort to a tighter network over-approximation for the bounds to the logit differences if needed~\cite{sparsealgosDePalma2024}.
We solve up to $2000$ branch-and-bound sub-problems in parallel at any time, leveraging GPU acceleration through \textsc{PyTorch}.
The other component, based on the evaluation of concrete points, seeks to find a counterexampe violating robustness, upper bounding the worst-case logit difference. In practice, we first run a relatively inexpensive adversarial attack based on a local optimizer, a 10-step Projected Gradient Descent~\cite{Madry2018} (PGD-10) using the minimal logit difference as loss function, from a single input sampled uniformly within the perturbation region, before entering the branch-and-bound loop.
Within branch-and-bound, concrete points to evaluate are collected as a by-product of the over-approximations, and a more expensive local optimizer~\cite{dong2018boosting} (500-step MI-FGSM) is repeatedly run.

\subparagraph*{Branching Phase.} 
In all cases, we use the ReLU splitting partitioning strategy, which operates by splitting ambiguous ReLUs (i.e., their pre-activation can take both negative and positive values for the considered input perturbations) 
into their two linear phases~\cite{Bunel20}.
In particular, we use a recent strategy~\cite{DePalma24} that heuristically branches on the ReLU which is deemed to impact the most the tightness of the over-approximation employed during the bounding phase.

\subparagraph*{Selective MILP Calls.} 
In order to speed up verification on smaller networks, we modified the \textsc{OVAL} framework to allow for the use of a Mixed Integer Linear Programming (MILP) solver whenever a subproblem has fewer than a given number of ambiguous ReLUs. 
In practice, we use this functionality in two different settings: on very small networks, we run the MILP solver at the root of branch-and-bound (i.e., before any splitting), if the quicker dual-based bounds fail to verify robustness.
On harder networks, where this will not pay off, we call the MILP solver when no ambiguous neurons are left, which is in practice faster than ensuring convergence of dual-based bounding on the underlying linear program.
MILPs are solved using Gurobi, a commercial black-box solver~\cite{gurobi}.

\subsection{FaVeX} \label{sec:impl-favex}

\subparagraph*{Restricted-Space Counterfactual Search.}
We run the novel restricted-space counterfactual search described in Section~\ref{subsec:attack} only
if the preliminary PGD-10 attack (see Section~\ref{sec:impl-verifier}) fails to find counterexamples.
We execute several  (i.e., $128$) searches for counterexamples in parallel over the GPU, starting from different points in the restricted search space.
Specifically, we include both the extrema of the search-space (the interval lower and upper bounds), and a series of starting points sampled uniformly within the interval.
PGD-10~\cite{Madry2018} is then run for each of these starting points, using the minimal logit difference as loss function: we call this implementation \textbf{Restricted-Space Attack (RSA)}.

\subparagraph*{Traversal.} 
Our traversal strategy (cf. Section~\ref{subsec:favex}) requires as many calls to the analyzer $a$ as the number of input features. In practice, we execute a number of these in parallel on a GPU.
We consider two analyzers: $\alpha$-CROWN~\cite{Xu21}, and the less expensive IBP~\cite{Gowal19,Gehr18}.

\subparagraph*{Leaf Reuse.} 

We implemented the leaf reuse described in Section~\ref{subsec:reuse} through a direct modification of the \textsc{OVAL} branch-and-bound code. 
We store leaves as a list of branching decisions, which are then enforced after the computation of the pre-activation bounds at the root.
In practice, we do not necessarily inherit the leaves from the previous branch-and-bound call.
For efficiency reasons, we may discard leaves (i.e., start branch-and-bound from scratch) when too many of them are stored (more than a tunable threshold).
Analogously, we discard leaves in case the previous call is a timeout and \textsc{FaVeX} is processing robustness queries sequentially (cf. Section~\ref{sec:favex}), as inheriting them would likely cause further timeouts.
Finally, in case of timeouts during the binary search batch processing mode, we inherit the leaves from the branch-and-bound call preceding the timeout.

\section{Experimental Evaluation}\label{sec:evaluation}


This section presents an experimental evaluation of verifier-optimal robust explanations (Section~\ref{sec:exp-definitions}), of \textsc{FaVeX} (Section~\ref{sec:exp-methods}), and of the associated traversal strategies (Section~\ref{sec:exp-traversals}), preceded by a description of the experimental setting (Section~\ref{sec:exp-setting}).

\subsection{Experimental Setting} \label{sec:exp-setting}

We carry out experiments on computer vision datasets popular in the formal explanations literature~\cite{Wu23,Wu24}. 
In particular, we consider MNIST~\cite{LeCun2010}, a black-and-white handwritten digit classification task; a $10$-class version of GTSRB~\cite{Stallkamp2012} taken from previous work~\cite{Wu23}, a traffic sign recognition task; and CIFAR-10~\cite{Krizhevsky2009}, a popular $10$-class image classification benchmarks (examples of classes including ``airplane'', ``automobile'', ``cat'', ``dog'').

\medskip 

\noindent
\textbf{Network Architectures.}
In order to assess the scalability of \textsc{FaVeX} and of the presented definition, we consider $4$ different feedforward ReLU networks of varying sizes, detailed in Table~\ref{table:architectures}.
The hardness of neural network verification mainly depends on the number of activations acting non-linearly.
\texttt{FC-10x2} is a fully connected network with two hidden layers of width equal to $10$. \texttt{FC-50x2} displays the same structure, but with hidden layers of width $50$.
Both networks have relatively few activations, and were trained using standard (SGD-based) network training and $\ell_1$ regularization.
\texttt{CNN-7} is a $7$-layer convolutional network that is commonly employed in the certified training literature (networks trained for verified robustness), where it reaches state-of-the-art performance~\cite{DePalma24}: we use this to showcase scalability on networks relevant to the broader robust machine learning literature, and take the trained networks directly from previous work~\cite{DePalma24}.
As \texttt{CNN-7} has $2.3 \times 10^{5}$ activations, we also train (using the same training scheme) a narrower $3$-layer version, named \texttt{CNN-3}, which has $2.46 \times 10^{4}$ activations.
\texttt{CNN-7} and \texttt{CNN-3} were trained to be robust against perturbations of $\epsilon=0.2$ and $\epsilon=\frac{2}{255}$ on MNIST, and CIFAR-10, respectively.

\begin{table}[tb!]
	\caption{Size of the network architectures employed in the experimental evaluation. For convolutional networks, the number of activations is computed for GTSRB and CIFAR-10 inputs.
		\label{table:architectures}}
	\sisetup{
		detect-all = true,
		exponent-mode = scientific,
		round-mode = figures,
		round-precision = 3
	}
	\centering
	
	\begin{adjustbox}{max width=\textwidth, center}
		\begin{tabular}{
				l
				c
				S[table-format=1.3e1]
				S[table-format=1.3e1]
			}
			\toprule
			
			\multicolumn{1}{ c }{Network} &
			\multicolumn{1}{ c }{N. layers} &
			\multicolumn{1}{ c }{N. activations} &
			\multicolumn{1}{ c }{N. parameters}\\
			
			\cmidrule(lr){1-1} \cmidrule(lr){2-4}
			
			\multicolumn{1}{ c }{\texttt{FC-10x2}}
			& 3 & 20 & 30950 \\
			
			\multicolumn{1}{ c }{\texttt{FC-50x2}}
			& 3 & 100 & 156710 \\
			
			\multicolumn{1}{ c }{\texttt{CNN-3}}
			& 3 & 24576 & 218378 \\
			
			\multicolumn{1}{ c }{\texttt{CNN-7}}
			& 7 & 229888 & 17192650 \\

			\bottomrule
			
		\end{tabular}
	\end{adjustbox}
\end{table}

\medskip 

\noindent
\textbf{Computational Setup.}
We run the experimental evaluation
on a single GPU, using 6 CPU cores and allocating 20GB of RAM.
In order to run multiple experiments at once, we use different machines, but allocate each combination of network and dataset to a specific machine for consistency.
We employed the following GPU models: Nvidia RTX 6000, Nvidia RTX 8000. All machines are equipped with the following CPU: AMD EPIC 7302.

\medskip 

\noindent
\textbf{Tuning.}
Before launching the experiments, we tested different configurations of the \textsc{OVAL} framework in order to minimize the overall runtime.
We found it beneficial to avoid the custom branch-and-bound, and to call the MILP solver before any branching (see Section~\ref{sec:impl-verifier}) on all the \texttt{FC-10x2} experiments, and on the \texttt{FC-50x2} MNIST experiments.
The only hyper-parameter associated to \textsc{FaVeX} is $k$, the maximal number of leaves to be stored for reuse before discarding them.
We tune $k \in \{0, 25, 100, 500, 5000, 50000\}$ on test images not employed for the evaluation. We leave the number of leaves unbounded if the tuning points to $k=50000$: this is the case on the fully-connected network instances for which we do not directly resort to the MILP solver (\texttt{FC-50x2} on GTSRB and CIFAR-10).

\begin{table}[b!]
	\caption{On \texttt{CNN-3}, \texttt{OVAL}-optimal robust explanations are significantly more scalable than standard robust explanations computed using the same verifier. Results are averaged over the first $10$ images of the test set. Entries highlighted in bold are the smallest average runtime, and the largest average in number of found counterfactuals. \label{table:definition-cnn3}}
	\sisetup{detect-weight=true,detect-inline-weight=math,detect-mode=true}
	\centering
	
	\renewcommand{\bfseries}{\fontseries{b}\selectfont}
	\newrobustcmd{\B}{\bfseries}
	\newcommand{\boldentry}[2]{%
		\multicolumn{1}{S[table-format=#1,
			mode=text,
			text-rm=\fontseries{b}\selectfont
			]}{#2}}
	\begin{adjustbox}{max width=\textwidth, center}
		\begin{tabular}{
				l
								S[table-format=3.2]
				S[table-format=3.2]
				S[table-format=3.2]
				S[table-format=3.2]
								S[table-format=3.2]
				S[table-format=3.2]
				S[table-format=3.2]
				S[table-format=3.2]
			}
			& \multicolumn{4}{ c }{MNIST, $\epsilon=0.25$} & \multicolumn{4}{ c }{CIFAR-10, $\epsilon=\frac{8}{255}$}  \\
			\toprule
			
			\multicolumn{1}{ c }{Robust Explanation} &
			\multicolumn{1}{ c }{$|\mathcal{C}_{\mathbf{x}} \cup \mathcal{U}_{\mathbf{x}}|$} &
			\multicolumn{1}{ c }{$|\mathcal{C}_{\mathbf{x}}|$} &
			\multicolumn{1}{ c }{$|\mathcal{U}_{\mathbf{x}}|$} &
			\multicolumn{1}{ c }{time [s]} &
			\multicolumn{1}{ c }{$|\mathcal{C}_{\mathbf{x}} \cup \mathcal{U}_{\mathbf{x}}|$} &
			\multicolumn{1}{ c }{$|\mathcal{C}_{\mathbf{x}}|$} &
			\multicolumn{1}{ c }{$|\mathcal{U}_{\mathbf{x}}|$} &
			\multicolumn{1}{ c }{time [s]} \\
			
			\cmidrule(lr){1-1} \cmidrule(lr){2-5} \cmidrule(lr){6-9} 
			
			\multicolumn{1}{ c }{\textsc{Standard}}
			&247.80 & 0.00 & 247.80 & 2689.77 & 461.00 & 0.00 & 461.00 & 8984.10  \\
			
			\multicolumn{1}{ c }{\textsc{V-optimal}}
			& 254.70 & \B 160.30 & 94.40 & \B 612.75 & 462.10 & \B 210.40 & 251.70 & \B 1150.82 \\
			
			\bottomrule
			
		\end{tabular}
	\end{adjustbox}
\end{table}

\subsection{Scalable Explanations} \label{sec:exp-definitions}

We here evaluate the scalability improvements associated to the definition of verifier-optimal robust explanations, presented in Section~\ref{sec:v-optimal-x}.
Specifically, we use \textsc{FaVeX} and a fixed traversal order (\textsc{$\alpha$-FaVeX} for CIFAR-10 and \textsc{FaVeX-IBP} for MNIST) to compute both (standard) robust explanations and \texttt{OVAL}-optimal robust explanations, using a per-query timeout of $60$ seconds on \texttt{CNN-3} and \texttt{CNN-7}.
Tables~\ref{table:definition-cnn3}-\ref{table:definition-cnn7} and Figure~\ref{fig:definition} show that \texttt{OVAL}-optimal robust explanations are marginally (less than $3\%$) larger than standard robust explanations. On the other hand,
standard robust explanations are extremely expensive to compute, ranging from roughly $45$ minutes per image on MNIST for \texttt{CNN-3}, to above $7$ hours per image on CIFAR-10 for \texttt{CNN-7}.
Note that, despite this computational effort, no counterexamples can be found (cf. end of Section~\ref{sec:verix}).
On the other hand, \texttt{OVAL}-optimal robust explanations are significantly faster to compute, and result in a significant number of counterfactuals, clearly showing the superior scalability of the proposed definition.

\begin{figure*}[t!]
	\hspace{15pt}
	\begin{subfigure}{0.22\textwidth}
		\centering
		\includegraphics[width=\textwidth]{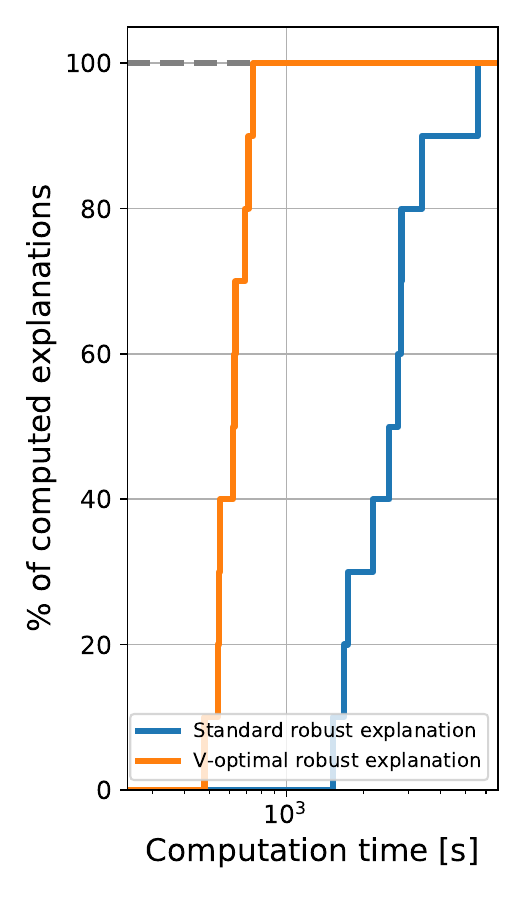}
		\captionsetup{justification=centering, labelsep=space, margin=6pt}
		\caption{\label{fig:definition-cnn3-MNIST} \texttt{CNN-3}, MNIST \\ \phantom{a}}
	\end{subfigure}\hfill
	\begin{subfigure}{0.22\textwidth}
		\centering
		\includegraphics[width=\textwidth]{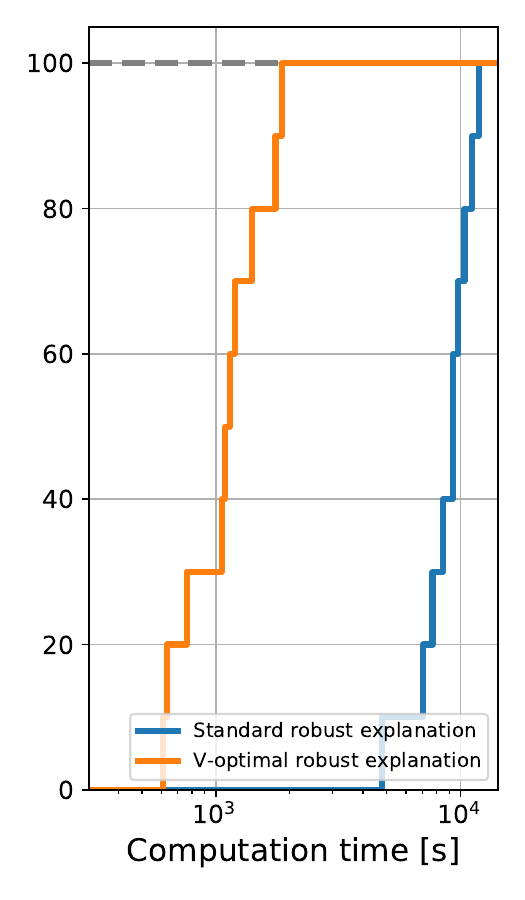}
		\captionsetup{justification=centering, labelsep=space, margin=6pt}
		\caption{\label{fig:definition-cnn3-CIFAR10} \texttt{CNN-3}, CIFAR-10}
	\end{subfigure}\hfill
	\begin{subfigure}{0.22\textwidth}
		\centering
		\includegraphics[width=\textwidth]{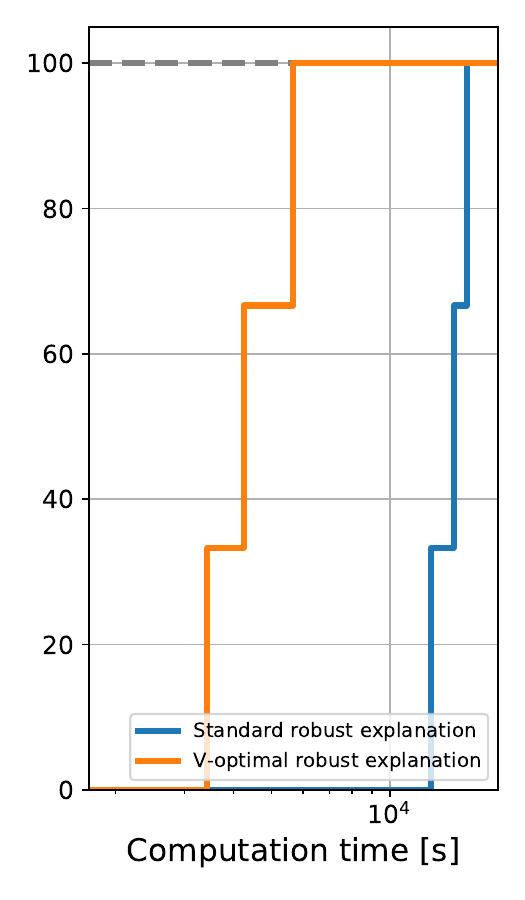}
		\captionsetup{justification=centering, labelsep=space, margin=6pt}
		\caption{\label{fig:definition-cnn7-MNIST} \texttt{CNN-7}, MNIST \\  \phantom{a}}
	\end{subfigure}\hfill
	\begin{subfigure}{0.22\textwidth}
		\centering
		\includegraphics[width=\textwidth]{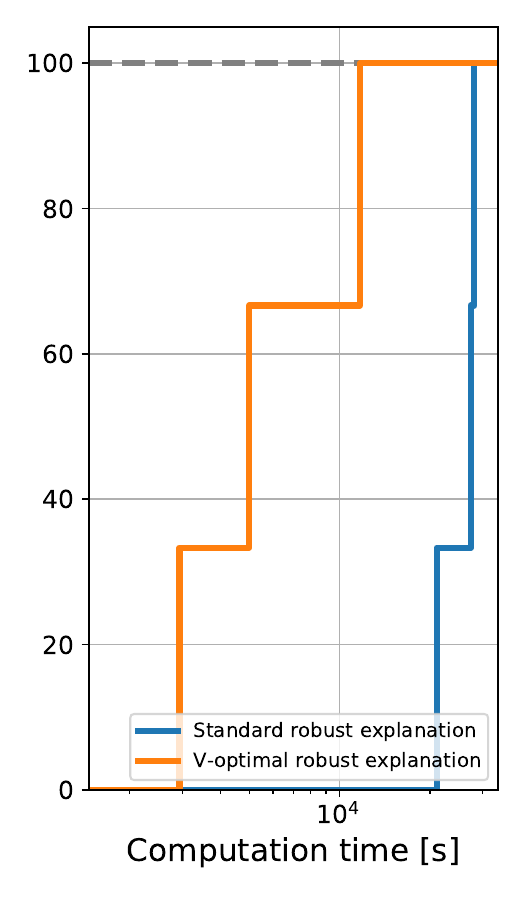}
		\captionsetup{justification=centering, labelsep=space, margin=6pt}
		\caption{\label{fig:definition-cnn7-CIFAR10} \texttt{CNN-7}, CIFAR-10}
	\end{subfigure}\hspace{15pt}
	\caption{\label{fig:definition} 
		Percentage of computed explanations as a function of runtime for Tables~\ref{table:definition-cnn3}~and~\ref{table:definition-cnn7}.  
	}
\end{figure*}

\begin{table}[b!]
	\caption{\texttt{OVAL}-optimal robust explanations are significantly more scalable than standard robust explanations computed using the same verifier on \texttt{CNN-7}. We average results over the first $3$ images of the test set, and highlight in bold the smallest average runtime the largest average in number of found counterfactuals. \label{table:definition-cnn7}}
	\sisetup{detect-weight=true,detect-inline-weight=math,detect-mode=true}
	\centering
	
	\renewcommand{\bfseries}{\fontseries{b}\selectfont}
	\newrobustcmd{\B}{\bfseries}
	\newcommand{\boldentry}[2]{%
		\multicolumn{1}{S[table-format=#1,
			mode=text,
			text-rm=\fontseries{b}\selectfont
			]}{#2}}
	\begin{adjustbox}{max width=\textwidth, center}
		\begin{tabular}{
				l
				S[table-format=3.2]
				S[table-format=3.2]
				S[table-format=3.2]
				S[table-format=5.2]
				S[table-format=3.2]
				S[table-format=3.2]
				S[table-format=5.2]
				S[table-format=5.2]
			}
			& \multicolumn{4}{ c }{MNIST, $\epsilon=0.25$} & \multicolumn{4}{ c }{CIFAR-10, $\epsilon=\frac{16}{255}$}  \\
			\toprule
			
			\multicolumn{1}{ c }{Robust Explanation} &
			\multicolumn{1}{ c }{$|\mathcal{C}_{\mathbf{x}} \cup \mathcal{U}_{\mathbf{x}}|$} &
			\multicolumn{1}{ c }{$|\mathcal{C}_{\mathbf{x}}|$} &
			\multicolumn{1}{ c }{$|\mathcal{U}_{\mathbf{x}}|$} &
			\multicolumn{1}{ c }{time [s]} &
			\multicolumn{1}{ c }{$|\mathcal{C}_{\mathbf{x}} \cup \mathcal{U}_{\mathbf{x}}|$} &
			\multicolumn{1}{ c }{$|\mathcal{C}_{\mathbf{x}}|$} &
			\multicolumn{1}{ c }{$|\mathcal{U}_{\mathbf{x}}|$} &
			\multicolumn{1}{ c }{time [s]} \\
			
			\cmidrule(lr){1-1} \cmidrule(lr){2-5} \cmidrule(lr){6-9} 
			
			\multicolumn{1}{ c }{\textsc{Standard}}
			& 452.00 & 0.00 & 452.00 & 14317.57 & 730.67 & 0.00 & 730.67 & 25487.29  \\
			
			\multicolumn{1}{ c }{\textsc{V-optimal}}
			& 456.66 & \B 207.33 & 249.33 & \B 4446.53 & 733.33 & \B 467.00 & 266.33 & \B 6532.98  \\
			
			\bottomrule
			
		\end{tabular}
	\end{adjustbox}
\end{table}

\subsection{Computing Explanations Faster} \label{sec:exp-methods}

We now turn to evaluating the efficacy of \textsc{FaVeX} when computing both standard robust explanations, and verifier-optimal robust explanations, again using a fixed traversal strategy (\textsc{$\alpha$-FaVeX} for CIFAR-10 and GTSRB, \textsc{FaVeX-IBP} for MNIST).
All considered algorithms to compute the explanations rely on the \textsc{OVAL} framework as verifier, using the same configuration (see Section~\ref{sec:impl-verifier}).
We compare against the following baselines:
\begin{itemize}
	\item \textsc{Sequential}, which processes all the input features sequentially (cf. Section~\ref{sec:favex}), mirroring what done in \textsc{VeriX}~\cite{Wu23}. 
	\item \textsc{Binary Search}, which exclusively uses the binary search-based batch processing, mirroring \textsc{VeriX+}~\cite{Wu24}.
\end{itemize}
Differently from the original works~\cite{Wu23,Wu24}, we use the \textsc{OVAL} verifier in both cases for consistency.
In order to assess the effect of each of the components of \textsc{FaVeX} in isolation, we evaluate the following methods, which are all novel in the context of verified explanations:
\begin{itemize}
	\item \textsc{BinS + Incr}, which uses incremental branch-and-bound (leaf reuse) as described in Section~\ref{subsec:reuse} on top of \textsc{Binary Search}.
	\item \textsc{BinS + Incr + RSA}, which employs the Restricted Space Attack (RSA) implementation of the counterfactual search presented in Section~\ref{subsec:attack} on top of \textsc{BinS + Incr}.
	\item \textsc{FaVeX}, which features the three following additions compared to \textsc{Binary Search}: (i) leaf reuse, (ii) RSA, and (iii) the fallback to sequential processing presented in Section~\ref{subsec:favex}.
\end{itemize}

\medskip

\noindent
\textbf{Standard Robust Explanations.}
Tables~\ref{table:time-fc-10x2}-\ref{table:time-fc-50x2}~and~Figure~\ref{fig:fc-10x2-fc-50x2} show that \textsc{FaVeX} is significantly faster than both baselines on all considered settings on both \texttt{FC-10x2} and \texttt{FC-50x2}.
\textsc{Binary Search} is consistently better than sequentially considering the input features on all benchmarks, cutting runtime almost by a third on MNIST for \texttt{FC-10x2}.
\textsc{FaVeX}, however, reduces runtime even further, attaining speedup factors of up to $13\times$ on \texttt{FC-10x2} and up to $16\times$ on \texttt{FC-50x2}.
Note that on \texttt{FC-10x2} and for MNIST on \texttt{FC-50x2}, where the MILP solver is called at the branch-and-bound root, no explicit \textsc{OVAL} branching is carried out (see Section~\ref{sec:impl-verifier}), so leaf reuse is not employed (hence the $/$ markers for \textsc{BinS + Incr} in Table~\ref{table:time-fc-50x2} and its absence in Table~\ref{table:time-fc-10x2}).
As visible from the performance of \textsc{BinS + Incr + RSA}, most of the remarkable improvements in these settings are to be attributed to the reduced-space counterfactual search. The fallback to sequential processing provides an additional speedup on most of the benchmarks, for instance cutting average runtime by more than $2\times$ on GTSRB for \texttt{FC-10x2}.
Incremental branch-and-bound (\textsc{BinS + Incr}) also achieves significant speed-ups on \texttt{FC-50x2}: more than a factor $1.7\times$ on both GTSRB and CIFAR10, pointing to its efficacy on computing standard robust explanations for small networks.

\begin{figure*}[b!]
	\hspace{15pt}
	\begin{subfigure}{0.19\textwidth}
		\centering
		\includegraphics[width=\textwidth]{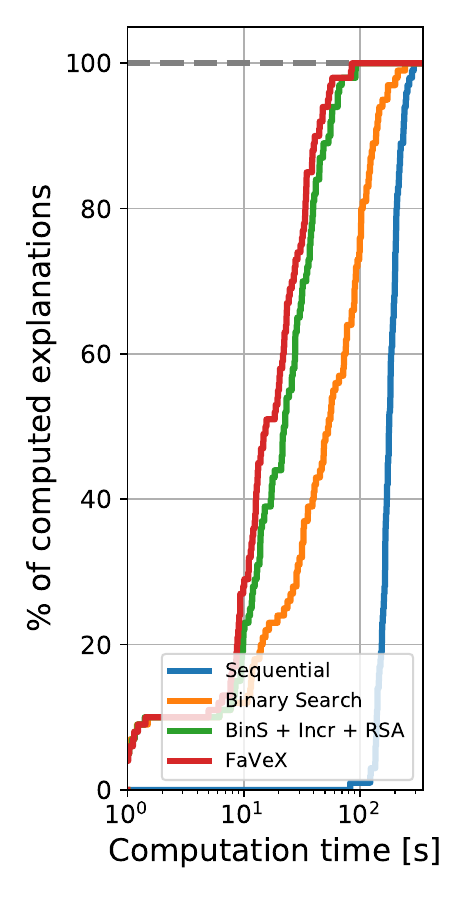}
		\captionsetup{justification=centering, labelsep=space, margin=6pt}
		\caption{\label{fig:fc-10x2-MNIST} \texttt{FC-10x2}, MNIST }
	\end{subfigure}\hfill
	\begin{subfigure}{0.19\textwidth}
		\centering
		\includegraphics[width=\textwidth]{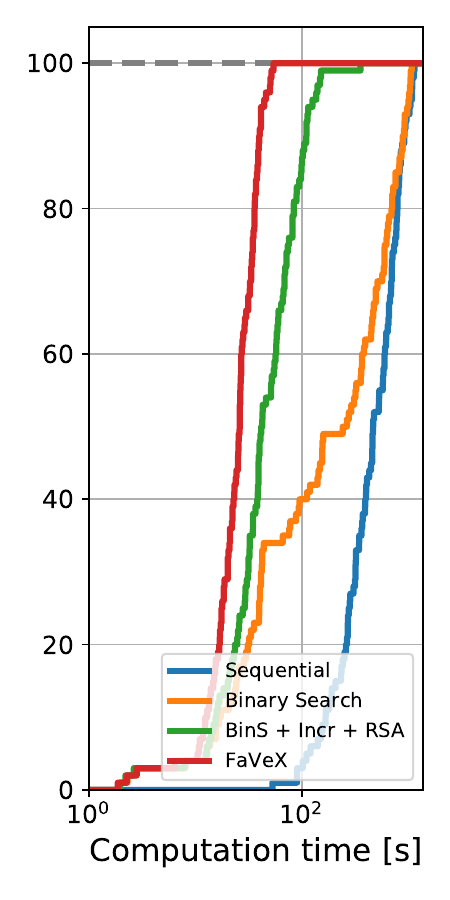}
		\captionsetup{justification=centering, labelsep=space, margin=6pt}
		\caption{\label{fig:fc-10x2-GTSRB}  \texttt{FC-10x2}, GTSRB}
	\end{subfigure}\hfill
	\begin{subfigure}{0.19\textwidth}
		\centering
		\includegraphics[width=\textwidth]{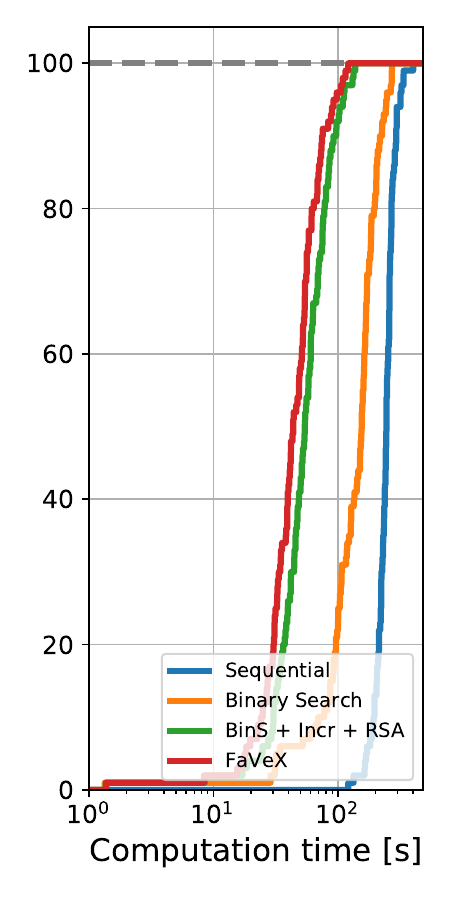}
		\captionsetup{justification=centering, labelsep=space, margin=6pt}
		\caption{\label{fig:fc-50x2-MNIST} \texttt{FC-50x2}, MNIST }
	\end{subfigure}\hfill
	\begin{subfigure}{0.19\textwidth}
		\centering
		\includegraphics[width=\textwidth]{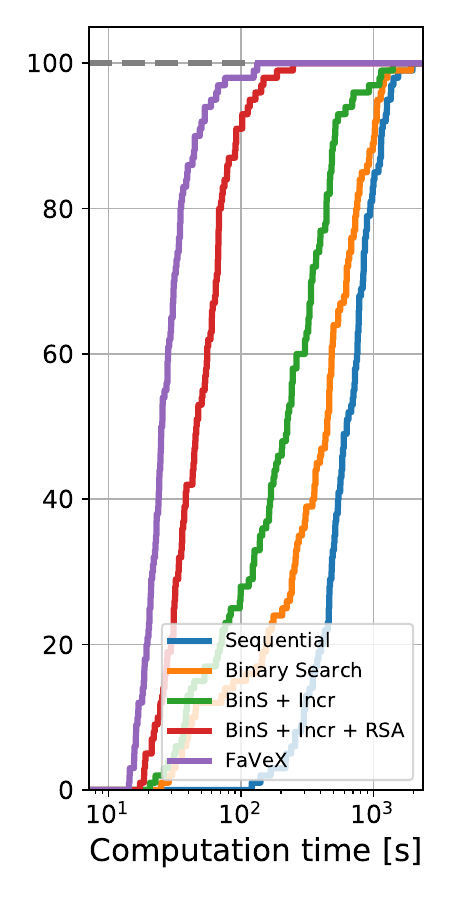}
		\captionsetup{justification=centering, labelsep=space, margin=6pt}
		\caption{\label{fig:fc-50x2-GTSRB}  \texttt{FC-50x2}, GTSRB}
	\end{subfigure}\hfill
	\begin{subfigure}{0.19\textwidth}
		\centering
		\includegraphics[width=\textwidth]{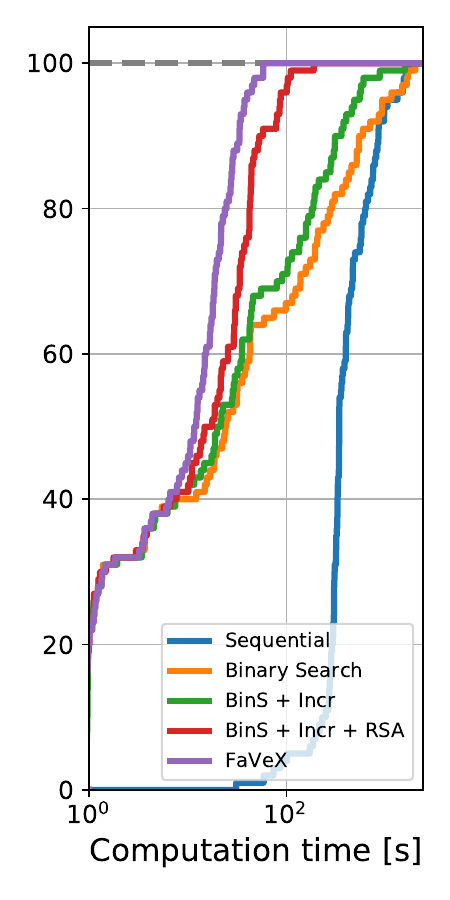}
		\captionsetup{justification=centering, labelsep=space, margin=6pt}
		\caption{\label{fig:fc-50x2-CIFAR10}  \texttt{FC-50x2}, CIFAR10}
	\end{subfigure}\hfill
	\caption{\label{fig:fc-10x2-fc-50x2} 
		Percentage of computed explanations as a function of runtime for Tables~\ref{table:time-fc-10x2}~and~\ref{table:time-fc-50x2}.  
	}
\end{figure*}

\begin{table}[tb!]
	\caption{\textsc{FaVeX} attains significant speed-ups against previous algorithms to compute standard robust explanations~\cite{Wu23,Wu24} on \texttt{FC-10x2}. Results are averaged over the first $100$ images of the test set. Bold entries correspond to the smallest average runtime. \label{table:time-fc-10x2}}
	\sisetup{detect-weight=true,detect-inline-weight=math,detect-mode=true}
	\centering
	
	\renewcommand{\bfseries}{\fontseries{b}\selectfont}
	\newrobustcmd{\B}{\bfseries}
	\newcommand{\boldentry}[2]{%
		\multicolumn{1}{S[table-format=#1,
			mode=text,
			text-rm=\fontseries{b}\selectfont
			]}{#2}}
	\begin{adjustbox}{max width=\textwidth, center}
		\begin{tabular}{
				l
				S[table-format=3.2]
				S[table-format=3.2]
				S[table-format=3.2]
				S[table-format=3.2]
			}
			& \multicolumn{2}{ c }{MNIST, $\epsilon=0.1$} & \multicolumn{2}{ c }{GTSRB, $\epsilon=0.05$}  \\
			\toprule
			
			\multicolumn{1}{ c }{Method} &
			\multicolumn{1}{ c }{$|\mathcal{E}_{\mathbf{x}}|$} &
			\multicolumn{1}{ c }{time [s]} &
			\multicolumn{1}{ c }{$|\mathcal{E}_{\mathbf{x}}|$} &
			\multicolumn{1}{ c }{time [s]} \\
			
			\cmidrule(lr){1-1} \cmidrule(lr){2-3} \cmidrule(lr){4-5} 
			
			\multicolumn{1}{ c }{\textsc{Sequential}}
			& 70.90 & 182.62 &    352.26 & 518.34   \\
			
			\multicolumn{1}{ c }{\textsc{Binary Search}}
			& 70.90 & 65.21 & 352.26 & 345.48 \\
			
			\cmidrule(lr){1-5} 
			
			\multicolumn{1}{ c }{\textsc{BinS + Incr + RSA}}
			& 70.90 & 25.98 & 352.26 & 56.26 \\
			
			\multicolumn{1}{ c }{\textsc{FaVeX}}
			&  70.90 & \B 21.21 & 352.26 & \B 26.08 \\

			\bottomrule
			
		\end{tabular}
	\end{adjustbox}
\end{table}

\begin{table}[tb!]
	\caption{\textsc{FaVeX} is significantly faster than previous algorithms to compute standard robust explanations~\cite{Wu23,Wu24} on \texttt{FC-50x2}. Results are averaged over the first $100$ images of the test set. Bold entries correspond to the smallest average runtime. \label{table:time-fc-50x2}}
	\sisetup{detect-weight=true,detect-inline-weight=math,detect-mode=true}
		\centering

		\renewcommand{\bfseries}{\fontseries{b}\selectfont}
		\newrobustcmd{\B}{\bfseries}
		\newcommand{\boldentry}[2]{%
			\multicolumn{1}{S[table-format=#1,
				mode=text,
				text-rm=\fontseries{b}\selectfont
				]}{#2}}
		\begin{adjustbox}{max width=\textwidth, center}
			\begin{tabular}{
					l
					S[table-format=3.2]
					S[table-format=3.2]
					S[table-format=3.2]
					S[table-format=3.2]
					S[table-format=3.2]
					S[table-format=3.2]
				}
				& \multicolumn{2}{ c }{MNIST, $\epsilon=0.2$} & \multicolumn{2}{ c }{GTSRB, $\epsilon=0.1$} & \multicolumn{2}{ c }{CIFAR-10, $\epsilon=\frac{2}{255}$} \\
				\toprule
				
				\multicolumn{1}{ c }{Method} &
				\multicolumn{1}{ c }{$|\mathcal{E}_{\mathbf{x}}|$} &
				\multicolumn{1}{ c }{time [s]} &
				\multicolumn{1}{ c }{$|\mathcal{E}_{\mathbf{x}}|$} &
				\multicolumn{1}{ c }{time [s]} &
				\multicolumn{1}{ c }{$|\mathcal{E}_{\mathbf{x}}|$} &
				\multicolumn{1}{ c }{time [s]} \\
				
				\cmidrule(lr){1-1} \cmidrule(lr){2-3} \cmidrule(lr){4-5} \cmidrule(lr){6-7}
				
				\multicolumn{1}{ c }{\textsc{Sequential}}
				& 82.74 & 243.38 &    287.02 & 688.24 & 204.23 & 468.23  \\
				
				\multicolumn{1}{ c }{\textsc{Binary Search}}
				& 82.74 & 146.45 & 287.02 & 476.55 & 204.23 & 197.55  \\

				\cmidrule(lr){1-7}
				
				\multicolumn{1}{ c }{\textsc{BinS + Incr}}
				& / & / & 286.99 & 275.47 & 204.23 & 110.91 \\
				
				\multicolumn{1}{ c }{\textsc{BinS + Incr + RSA}}
				& 82.74 & 58.33 & 287.01 & 55.40 & 204.23 & 25.13 \\
				
				\multicolumn{1}{ c }{\textsc{FaVeX}}
				& 82.74 & \B 48.16 & 287.02 & \B 30.48 & 204.23 & \B 13.92  \\

				\bottomrule
				
			\end{tabular}
		\end{adjustbox}
\end{table}

\medskip

\noindent
\textbf{Verifier-Optimal Robust Explanations.}
\begin{table}[tb!]
	\caption{On \texttt{CNN-3}, \textsc{FaVeX} computes \texttt{OVAL}-optimal robust explanations faster than previous computation strategies~\cite{Wu23,Wu24} while at the same time finding more counter-factuals. Results are averaged over the first $10$ images of the test set. Bold entries correspond to the smallest average runtime and the largest average in number of provided counterfactuals. \label{table:time-cnn3}}
	\sisetup{detect-weight=true,detect-inline-weight=math,detect-mode=true}
	\centering
	
	\renewcommand{\bfseries}{\fontseries{b}\selectfont}
	\newrobustcmd{\B}{\bfseries}
	\newcommand{\boldentry}[2]{%
		\multicolumn{1}{S[table-format=#1,
			mode=text,
			text-rm=\fontseries{b}\selectfont
			]}{#2}}
	\begin{adjustbox}{max width=\textwidth, center}
		\begin{tabular}{
				l
				S[table-format=3.2]
				S[table-format=3.2]
				S[table-format=4.2]
				S[table-format=3.2]
				S[table-format=3.2]
				S[table-format=4.2]
			}
			& \multicolumn{3}{ c }{MNIST, $\epsilon=0.25$} & \multicolumn{3}{ c }{CIFAR-10, $\epsilon=\frac{8}{255}$}  \\
			\toprule
			
			\multicolumn{1}{ c }{Method} &
			\multicolumn{1}{ c }{$|\mathcal{C}_{\mathbf{x}}|$} &
			\multicolumn{1}{ c }{$|\mathcal{U}_{\mathbf{x}}|$} &
			\multicolumn{1}{ c }{time [s]} &
			\multicolumn{1}{ c }{$|\mathcal{C}_{\mathbf{x}}|$} &
			\multicolumn{1}{ c }{$|\mathcal{U}_{\mathbf{x}}|$} &
			\multicolumn{1}{ c }{time [s]} \\
			
			\cmidrule(lr){1-1} \cmidrule(lr){2-4} \cmidrule(lr){5-7} 
			
			\multicolumn{1}{ c }{\textsc{Sequential}}
			& 129.10 & 125.50 & 1164.89 & 209.30 & 253.10 & 1287.96  \\
			
			\multicolumn{1}{ c }{\textsc{Binary Search}}
			& 134.20 & 120.40 & 1026.67 & 209.60 & 252.70 & 1367.25 \\
			
			\cmidrule(lr){1-7} 
			
			\multicolumn{1}{ c }{\textsc{BinS + Incr}}
			& 135.20 & 119.50 & 1040.82 & 209.60 & 252.50 & 1374.40 \\
			
			\multicolumn{1}{ c }{\textsc{BinS + Incr + RSA}}
			& 157.70 & 96.60 & 727.94 & \B  211.90 & 250.20 & 1334.00 \\
			
			\multicolumn{1}{ c }{\textsc{FaVeX}}
			&  \B 160.30 & 94.40 & \B 612.75 & 210.40 & 251.70 & \B 1150.82 \\

			\bottomrule
			
		\end{tabular}
	\end{adjustbox}
\end{table}
\begin{table}[tb!]
	\caption{On \texttt{CNN-7}, \textsc{FaVeX} computes \texttt{OVAL}-optimal robust explanations faster than previous computation strategies~\cite{Wu23,Wu24} while at the same time finding more counter-factuals. Results are averaged over the first $3$ images of the test set. Bold entries correspond to the smallest average runtime and the largest average in number of provided counterfactuals. \label{table:time-cnn7}}
	\sisetup{detect-weight=true,detect-inline-weight=math,detect-mode=true}
	\centering
	
	\renewcommand{\bfseries}{\fontseries{b}\selectfont}
	\newrobustcmd{\B}{\bfseries}
	\newcommand{\boldentry}[2]{%
		\multicolumn{1}{S[table-format=#1,
			mode=text,
			text-rm=\fontseries{b}\selectfont
			]}{#2}}
	\begin{adjustbox}{max width=\textwidth, center}
		\begin{tabular}{
				l
				S[table-format=3.2]
				S[table-format=3.2]
				S[table-format=4.2]
				S[table-format=3.2]
				S[table-format=3.2]
				S[table-format=4.2]
			}
			& \multicolumn{3}{ c }{MNIST, $\epsilon=0.25$} & \multicolumn{3}{ c }{CIFAR-10, $\epsilon=\frac{16}{255}$}  \\
			\toprule
			
			\multicolumn{1}{ c }{Method} &
			\multicolumn{1}{ c }{$|\mathcal{C}_{\mathbf{x}}|$} &
			\multicolumn{1}{ c }{$|\mathcal{U}_{\mathbf{x}}|$} &
			\multicolumn{1}{ c }{time [s]} &
			\multicolumn{1}{ c }{$|\mathcal{C}_{\mathbf{x}}|$} &
			\multicolumn{1}{ c }{$|\mathcal{U}_{\mathbf{x}}|$} &
			\multicolumn{1}{ c }{time [s]} \\
			
			\cmidrule(lr){1-1} \cmidrule(lr){2-4} \cmidrule(lr){5-7} 
			
			\multicolumn{1}{ c }{\textsc{Sequential}}
			& 142.33 & 315.00 & 7979.61 & 464.00 & 269.33 & 6868.53  \\
			
			\multicolumn{1}{ c }{\textsc{Binary Search}}
			& 148.33 & 308.67 & 9165.06 & 464.33 & 269.00 & 8284.12\\
			
			\cmidrule(lr){1-7} 
			
			\multicolumn{1}{ c }{\textsc{BinS + Incr}}
			& 151.33 & 305.00 & 9116.13 & 467.00 & 266.33 & 8231.01 \\
			
			\multicolumn{1}{ c }{\textsc{BinS + Incr + RSA}}
			& 196.33 & 260.33 & 5406.70 & \B 468.00 & 265.33 & 8304.52 \\
			
			\multicolumn{1}{ c }{\textsc{FaVeX}}
			& \B 207.33 & 249.33 & \B 4446.53 & 467.00 & 266.33 & \B 6532.98 \\

			\bottomrule
			
		\end{tabular}
	\end{adjustbox}
\end{table}
As visible from Tables~\ref{table:time-cnn3}-\ref{table:time-cnn7} and Figure~\ref{fig:cnn3-cnn7}, \textsc{FaVeX} is also beneficial when computing \texttt{OVAL}-optimal robust explanations on larger networks, with speed-ups up to $67\%$ and $79\%$ on MNIST for \texttt{CNN-3} and for \texttt{CNN-7}, respectively.
Remarkably, on MNIST, the use of \textsc{FaVeX} also results in a larger number of counterfactuals being found, pointing to its superior efficacy in terms of counterfactual search: as visible by comparing \textsc{BinS + Incr + RSA} with \textsc{BinS + Incr}, this is due to the efficacy of the reduced-space attack.
Interestingly, \textsc{Sequential} is almost always preferable to \textsc{Binary Search} on these networks: this is due to the relatively large size of the explanations. In fact, for larger explanations, \textsc{Binary Search} will go relatively deep in most branches of its recursion tree, hence resulting in a larger number of calls to the verifier.
This is reflected in the relative benefit of the fallback to sequential processing (visible by comparing \textsc{FaVeX} with \textsc{BinS + Incr + RSA}), which is the only \textsc{FaVeX} component yielding consistent speed-ups on all considered \texttt{CNN-3} and \texttt{CNN-7} benchmarks.
Incremental branch-and-bound does not appear to be beneficial on these larger networks: we ascribe this to the significantly larger activation space, which makes it unlikely for a branching decision to be effective across queries to the verifier.
Finally, it is worth pointing out that the time to compute the explanations sharply increases with the size of the network. Focusing on MNIST, this takes \textsc{FaVeX} $21.21$ seconds on average for \texttt{FC-10x2}, $48.16$ on \texttt{FC-50x2}, $612.75$ seconds on \texttt{CNN-3}, and $4446.53$ seconds on \texttt{CNN-3}.

\begin{figure*}[b!]
	\hspace{15pt}
	\begin{subfigure}{0.22\textwidth}
		\centering
		\includegraphics[width=\textwidth]{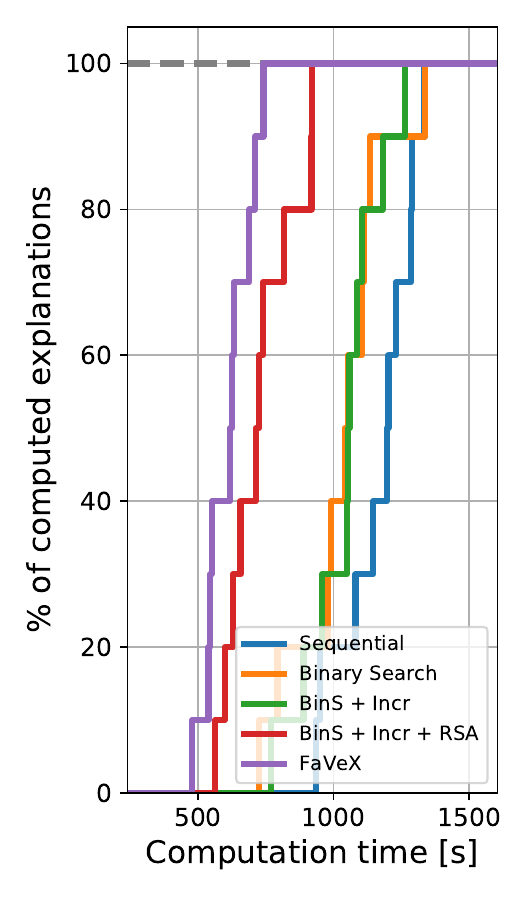}
		\captionsetup{justification=centering, labelsep=space, margin=6pt}
		\caption{\label{fig:cnn3-MNIST} \texttt{CNN-3}, MNIST \\ \phantom{a}}
	\end{subfigure}\hfill
	\begin{subfigure}{0.22\textwidth}
		\centering
		\includegraphics[width=\textwidth]{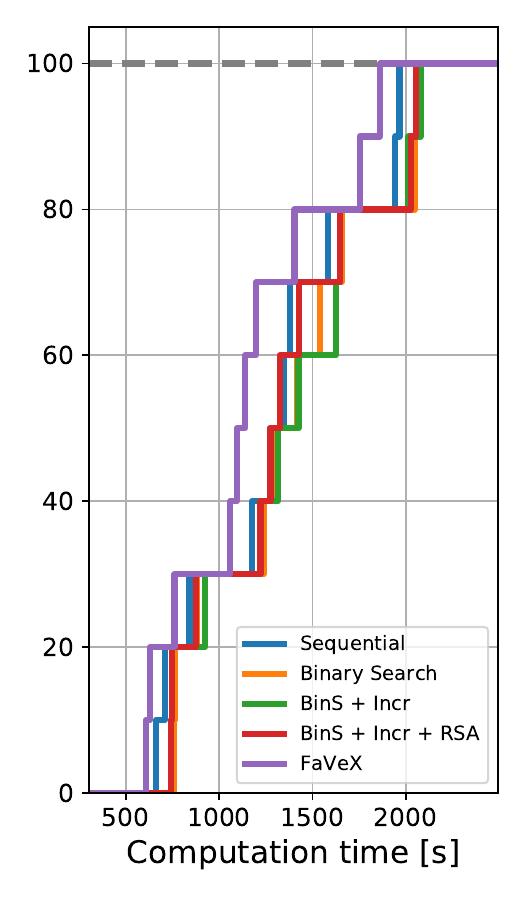}
		\captionsetup{justification=centering, labelsep=space, margin=6pt}
		\caption{\label{fig:cnn3-CIFAR10} \texttt{CNN-3}, CIFAR-10}
	\end{subfigure}\hfill
	\begin{subfigure}{0.22\textwidth}
		\centering
		\includegraphics[width=\textwidth]{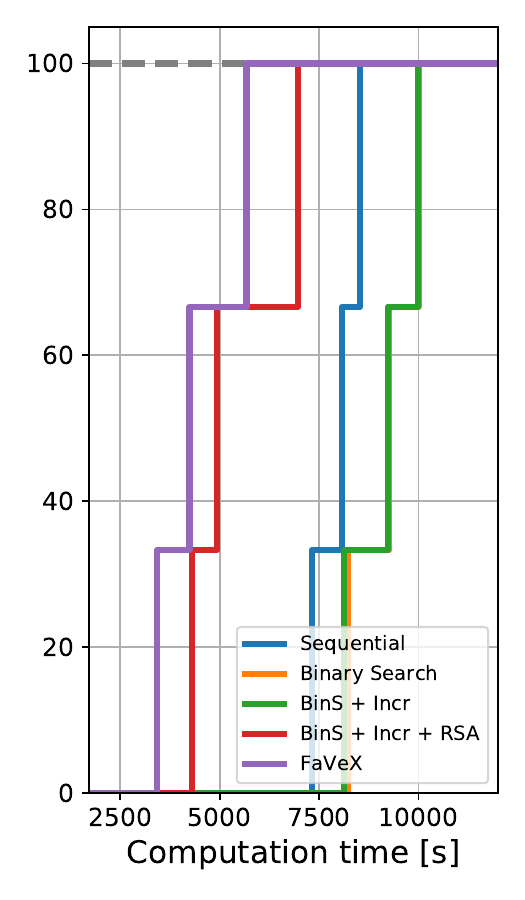}
		\captionsetup{justification=centering, labelsep=space, margin=6pt}
		\caption{\label{fig:cnn7-MNIST} \texttt{CNN-7}, MNIST \\  \phantom{a}}
	\end{subfigure}\hfill
	\begin{subfigure}{0.22\textwidth}
		\centering
		\includegraphics[width=\textwidth]{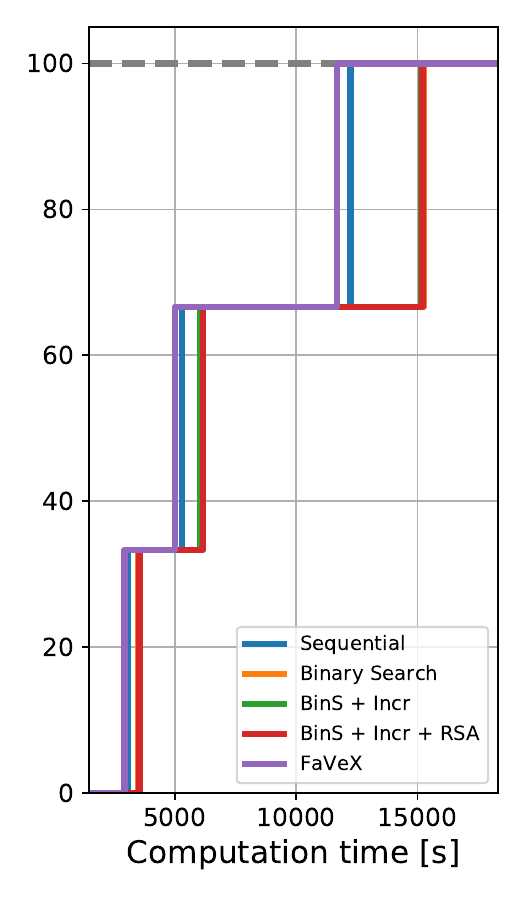}
		\captionsetup{justification=centering, labelsep=space, margin=6pt}
		\caption{\label{fig:cnn7-CIFAR10} \texttt{CNN-7}, CIFAR-10}
	\end{subfigure}\hspace{15pt}
	\caption{\label{fig:cnn3-cnn7} 
		Percentage of computed explanations as a function of runtime for Tables~\ref{table:time-cnn3}~and~\ref{table:time-cnn7}.  
	}
\end{figure*}

\subsection{Traversal Strategies Analysis} \label{sec:exp-traversals}

We now turn our attention to studying the effect of the employed traversal strategies, comparing \textsc{$\alpha$-FaVeX} and \textsc{FaVeX-IBP} with the \textsc{VeriX}~\cite{Wu23} and \textsc{VeriX+}~\cite{Wu24} traversals.
We use \textsc{FaVeX} 
to compute the explanations 
owing to its superior speed.

\medskip

\noindent
\textbf{Standard Robust Explanations.}
Tables~\ref{table:traversal-fc-10x2}~and~\ref{table:traversal-fc-50x2} show the results of the analysis for standard robust explanations on smaller networks (\texttt{FC-10x2} and \texttt{FC-50x2}).
While \textsc{$\alpha$-FaVeX} consistently yields the smallest explanations on GTSRB and CIFAR-10, it underperforms on MNIST, where strategies based on IBP attain superior performance.
Nevertheless, on GTSRB and CIFAR-10, except for the \textsc{VeriX} strategy, which appears to produce markedly larger explanations, the performance of the considered strategies is not particularly dissimilar, with the best size reduction of \textsc{$\alpha$-FaVeX} (attained on CIFAR-10 for \texttt{FC-50x2}) compared to \textsc{VeriX+} being $9\%$.
Finally, we remark that \texttt{FC-50x2} appears to be more robust than the smaller \texttt{FC-10x2}: on GTSRB, smaller explanations are produced by all traversals except \textsc{VeriX}, in spite of the twice-larger considered perturbation radius.

\begin{table}[tb!]
	\caption{Comparison of different traversal strategies for standard robust explanations on \texttt{FC-10x2}. Results are averaged over the first $100$ images of the test set. Bold entries correspond to the smallest average explanation size.
		\label{table:traversal-fc-10x2}}
	\sisetup{detect-weight=true,detect-inline-weight=math,detect-mode=true}
	\centering
	
	\renewcommand{\bfseries}{\fontseries{b}\selectfont}
	\newrobustcmd{\B}{\bfseries}
	\newcommand{\boldentry}[2]{%
		\multicolumn{1}{S[table-format=#1,
			mode=text,
			text-rm=\fontseries{b}\selectfont
			]}{#2}}
	\begin{adjustbox}{max width=\textwidth, center}
		\begin{tabular}{
				l
				S[table-format=3.2]
				S[table-format=3.2]
				S[table-format=3.2]
				S[table-format=3.2]
			}
			& \multicolumn{2}{ c }{MNIST, $\epsilon=0.1$} & \multicolumn{2}{ c }{GTSRB, $\epsilon=0.05$} \\
			\toprule
			
			\multicolumn{1}{ c }{Traversal} &
			\multicolumn{1}{ c }{$|\mathcal{E}_{\mathbf{x}}|$} &
			\multicolumn{1}{ c }{time [s]} &
			\multicolumn{1}{ c }{$|\mathcal{E}_{\mathbf{x}}|$} &
			\multicolumn{1}{ c }{time [s]} \\
			
			\cmidrule(lr){1-1} \cmidrule(lr){2-3} \cmidrule(lr){4-5}
			
			\multicolumn{1}{ c }{\textsc{VeriX}}
			& 91.26 & 38.10 & 737.02 & 37.49 \\
			
			\multicolumn{1}{ c }{\textsc{VeriX+}}
			& 72.08 & 23.05 & 357.26 & 24.06 \\
			
			\cmidrule(lr){1-5} 
			
			\multicolumn{1}{ c }{\textsc{$\alpha$-FaVeX}}
			& 79.85 & 30.24 & \B 352.26 & 26.08  \\
			
			\multicolumn{1}{ c }{\textsc{FaVeX-IBP}}
			& \B 70.90 & 21.21 & 355.52 & 24.32 \\

			\bottomrule
			
		\end{tabular}
	\end{adjustbox}
\end{table}

\begin{table}[tb!]
	\caption{Comparison of different traversal strategies for standard robust explanations on \texttt{FC-50x2}. Results are averaged over the first $100$ images of the test set. Bold entries correspond to the smallest average explanation size.
		\label{table:traversal-fc-50x2}}
	\sisetup{detect-weight=true,detect-inline-weight=math,detect-mode=true}
	\centering
	
	\renewcommand{\bfseries}{\fontseries{b}\selectfont}
	\newrobustcmd{\B}{\bfseries}
	\newcommand{\boldentry}[2]{%
		\multicolumn{1}{S[table-format=#1,
			mode=text,
			text-rm=\fontseries{b}\selectfont
			]}{#2}}
	\begin{adjustbox}{max width=\textwidth, center}
		\begin{tabular}{
				l
				S[table-format=3.2]
				S[table-format=3.2]
				S[table-format=3.2]
				S[table-format=3.2]
				S[table-format=3.2]
				S[table-format=3.2]
			}
			& \multicolumn{2}{ c }{MNIST, $\epsilon=0.2$} & \multicolumn{2}{ c }{GTSRB, $\epsilon=0.1$} & \multicolumn{2}{ c }{CIFAR-10, $\epsilon=\frac{2}{255}$} \\
			\toprule
			
			\multicolumn{1}{ c }{Traversal} &
			\multicolumn{1}{ c }{$|\mathcal{E}_{\mathbf{x}}|$} &
			\multicolumn{1}{ c }{time [s]} &
			\multicolumn{1}{ c }{$|\mathcal{E}_{\mathbf{x}}|$} &
			\multicolumn{1}{ c }{time [s]} &
			\multicolumn{1}{ c }{$|\mathcal{E}_{\mathbf{x}}|$} &
			\multicolumn{1}{ c }{time [s]} \\
			
			\cmidrule(lr){1-1} \cmidrule(lr){2-3} \cmidrule(lr){4-5} \cmidrule(lr){6-7}
			
			\multicolumn{1}{ c }{\textsc{VeriX}}
			& 94.02 & 85.00 & 851.64 & 117.20 & 310.56 & 18.40  \\
			
			\multicolumn{1}{ c }{\textsc{VeriX+}}
			& \B 77.46 & 48.17 & 289.56 & 25.74 & 222.66 & 13.85  \\
			
			\cmidrule(lr){1-7} 
			
			\multicolumn{1}{ c }{\textsc{$\alpha$-FaVeX}}
			& 102.35 & 67.01 & \B 287.02 &  30.48 & \B 204.23 & 13.92  \\
			
			\multicolumn{1}{ c }{\textsc{FaVeX-IBP}}
			& 82.74 & 48.16 & 287.25 & 25.31 & 220.96 & 14.02  \\

			\bottomrule
			
		\end{tabular}
	\end{adjustbox}
\end{table}

\medskip

\noindent
\textbf{Verifier-Optimal Robust Explanations.}
Tables~\ref{table:traversal-cnn3}~and~\ref{table:traversal-cnn7} study traversals for \textsc{OVAL}-optimal robust explanations on the larger \texttt{CNN-3} and \texttt{CNN-7}.
In order to allow for a coherent comparison, we here consider the explanation as the union of $\mathcal{C}_{\mathbf{x}}$ and $\mathcal{U}_{\mathbf{x}}$.
As for the smaller networks, \textsc{VeriX+} produces smaller explanations on MNIST.
On the other hand, the margins of improvement of \textsc{$\alpha$-FaVeX} are reduced, with \textsc{FaVeX-IBP} producing marginally smaller explanations on CIFAR-10 for \texttt{CNN-7}.
At the same time, \textsc{$\alpha$-FaVeX} non-negligibly reduces the number of counterfactuals on CIFAR-10 for both networks, highlighting that the allocation of pixels between $\mathcal{C}_{\mathbf{x}}$ and $\mathcal{U}_{\mathbf{x}}$ depends on the traversal strategy.

\begin{table}[h!]
	\caption{Comparison of different traversal strategies for \texttt{OVAL}-optimal robust explanations~on~\texttt{CNN-3}. Results are averaged over the first $10$ images of the test set. Bold entries correspond to the smallest $\mathcal{C}_{\mathbf{x}} \cup \mathcal{U}_{\mathbf{x}}$. \label{table:traversal-cnn3}}
	\sisetup{detect-weight=true,detect-inline-weight=math,detect-mode=true}
	\centering
	
	\renewcommand{\bfseries}{\fontseries{b}\selectfont}
	\newrobustcmd{\B}{\bfseries}
	\newcommand{\boldentry}[2]{%
		\multicolumn{1}{S[table-format=#1,
			mode=text,
			text-rm=\fontseries{b}\selectfont
			]}{#2}}
	\begin{adjustbox}{max width=\textwidth, center}
		\begin{tabular}{
				l
				S[table-format=3.2]
				S[table-format=3.2]
				S[table-format=3.2]
				S[table-format=4.2]
				S[table-format=3.2]
				S[table-format=3.2]
				S[table-format=3.2]
				S[table-format=4.2]
			}
			& \multicolumn{4}{ c }{MNIST, $\epsilon=0.25$} & \multicolumn{4}{ c }{CIFAR-10, $\epsilon=\frac{8}{255}$}  \\
			\toprule
			
			\multicolumn{1}{ c }{Method} &
			\multicolumn{1}{ c }{$|\mathcal{C}_{\mathbf{x}} \cup \mathcal{U}_{\mathbf{x}}|$} &
			\multicolumn{1}{ c }{$|\mathcal{C}_{\mathbf{x}}|$} &
			\multicolumn{1}{ c }{$|\mathcal{U}_{\mathbf{x}}|$} &
			\multicolumn{1}{ c }{time [s]} &
			\multicolumn{1}{ c }{$|\mathcal{C}_{\mathbf{x}} \cup \mathcal{U}_{\mathbf{x}}|$} &
			\multicolumn{1}{ c }{$|\mathcal{C}_{\mathbf{x}}|$} &
			\multicolumn{1}{ c }{$|\mathcal{U}_{\mathbf{x}}|$} &
			\multicolumn{1}{ c }{time [s]} \\
			
			\cmidrule(lr){1-1} \cmidrule(lr){2-5} \cmidrule(lr){6-9} 
			
			\multicolumn{1}{ c }{\textsc{VeriX}}
			& 282.80 &160.50 & 122.30 & 974.35 & 765.50 & 437.20 & 328.30 & 1925.09 \\
			
			\multicolumn{1}{ c }{\textsc{VeriX+}}
			& \B 247.40 & 155.20 & 92.20 & 576.67 & 468.90 & 262.00 & 206.90 & 915.45  \\
			
			\cmidrule(lr){1-9} 
			
			\multicolumn{1}{ c }{\textsc{$\alpha$-FaVeX}}
			& 289.90 & 181.20 & 108.70 & 696.31 & \B 462.10 & 210.40 & 251.70 & 1150.82 \\
			
			\multicolumn{1}{ c }{\textsc{FaVeX-IBP}}
			& 254.70 & 160.30 & 94.40 & 612.75 & 466.40 & 250.90 & 215.50 & 941.95 \\

			\bottomrule
			
		\end{tabular}
	\end{adjustbox}
\end{table}

\begin{table}[h!]
	\caption{Comparison of different traversal strategies for \texttt{OVAL}-optimal robust explanations~on~\texttt{CNN-7}. Results are averaged over the first $3$ images of the test set. Bold entries correspond to the smallest $\mathcal{C}_{\mathbf{x}} \cup \mathcal{U}_{\mathbf{x}}$. \label{table:traversal-cnn7}}
	\sisetup{detect-weight=true,detect-inline-weight=math,detect-mode=true}
	\centering
	
	\renewcommand{\bfseries}{\fontseries{b}\selectfont}
	\newrobustcmd{\B}{\bfseries}
	\newcommand{\boldentry}[2]{%
		\multicolumn{1}{S[table-format=#1,
			mode=text,
			text-rm=\fontseries{b}\selectfont
			]}{#2}}
	\begin{adjustbox}{max width=\textwidth, center}
		\begin{tabular}{
				l
				S[table-format=3.2]
				S[table-format=3.2]
				S[table-format=3.2]
				S[table-format=4.2]
				S[table-format=3.2]
				S[table-format=3.2]
				S[table-format=3.2]
				S[table-format=4.2]
			}
			& \multicolumn{3}{ c }{MNIST, $\epsilon=0.25$} & \multicolumn{3}{ c }{CIFAR-10, $\epsilon=\frac{16}{255}$}  \\
			\toprule
			
			\multicolumn{1}{ c }{Method} &
			\multicolumn{1}{ c }{$|\mathcal{C}_{\mathbf{x}} \cup \mathcal{U}_{\mathbf{x}}|$} &
			\multicolumn{1}{ c }{$|\mathcal{C}_{\mathbf{x}}|$} &
			\multicolumn{1}{ c }{$|\mathcal{U}_{\mathbf{x}}|$} &
			\multicolumn{1}{ c }{time [s]} &
			\multicolumn{1}{ c }{$|\mathcal{C}_{\mathbf{x}} \cup \mathcal{U}_{\mathbf{x}}|$} &
			\multicolumn{1}{ c }{$|\mathcal{C}_{\mathbf{x}}|$} &
			\multicolumn{1}{ c }{$|\mathcal{U}_{\mathbf{x}}|$} &
			\multicolumn{1}{ c }{time [s]} \\
			
			\cmidrule(lr){1-1} \cmidrule(lr){2-5} \cmidrule(lr){6-9} 
			
			\multicolumn{1}{ c }{\textsc{VeriX}}
			& 547.00 & 123.33 & 423.67 & 7763.91 & 945.00 & 728.67 & 216.33 & 4974.69 \\
			
			\multicolumn{1}{ c }{\textsc{VeriX+}}
			&  \B 429.33 & 196.67 & 232.67 & 4394.67 & 735.67 & 522.00 & 213.67 & 5105.86  \\
			
			\cmidrule(lr){1-9} 
			
			\multicolumn{1}{ c }{\textsc{$\alpha$-FaVeX}}
			&  552.33 & 234.67 & 317.67 & 5344.59 & 733.33 & 467.00 & 266.33 & 6532.98 \\
			
			\multicolumn{1}{ c }{\textsc{FaVeX-IBP}}
			& 456.67 & 207.33 & 249.33 & 4446.53 & \B 729.33 & 512.67 & 216.67 & 5099.98 \\

			\bottomrule
			
		\end{tabular}
	\end{adjustbox}
\end{table}

\subsection{Ablation: Cap on Stored Leaves} \label{sec:exp-leaves}

\textsc{FaVeX} features only a single hyper-parameter: $k$, the maximum number of leaves to be stored for reuse, tuned as described in Section~\ref{sec:exp-setting}. As outlined in Section~\ref{sec:impl-favex}, when the number of branch-and-bound leaves from a call to the verifier is greater than $k$, branch-and-bound for the next verifier call will start from scratch, discarding any leaf information.
In order to report the effect of $k$ on final performance, we here show results for three values: one relatively small ($k=25$), one intermediate ($k=500$), and one associated to uncapped storage ($k=\infty$). 
Consistently with our focus on the efficiency of \textsc{FaVeX}, we report results for the setups from Section~\ref{sec:exp-methods}, excluding benchmarks where the MILP solver is called at the branch-and-bound root.
Tables~\ref{table:leaves-fc-50x2},~\ref{table:leaves-cnn3}~and~\ref{table:leaves-cnn7} show that \textsc{FaVeX} is relatively insensitive to its hyper-parameter, and that the tuning process consistently leads to good performance. 
Consistently with the fact that leaf reuse was found to be ineffective on \texttt{CNN-3} and \texttt{CNN-7} in Section~\ref{sec:exp-methods}, allowing for the reuse of a large number of leaves leads has a negative impact on performance in Tables~\ref{table:leaves-cnn3}~and~\ref{table:leaves-cnn7}.

\begin{table}[tb!]
	\caption{ 
		Ablation on $k$, the maximum number of leaves that can be stored for reuse within \textsc{FaVeX}, for the experiments from Table~\ref{table:time-fc-50x2}. Results are averaged over the first $100$ images of the test set. Bold entries correspond to the tuned $k$ value employed in the original experiment. \label{table:leaves-fc-50x2}}
	\sisetup{detect-weight=true,detect-inline-weight=math,detect-mode=true}
	\centering
	
	\renewcommand{\bfseries}{\fontseries{b}\selectfont}
	\newrobustcmd{\B}{\bfseries}
	\newcommand{\boldentry}[2]{%
		\multicolumn{1}{S[table-format=#1,
			mode=text,
			text-rm=\fontseries{b}\selectfont
			]}{#2}}
	\begin{adjustbox}{max width=\textwidth, center}
		\begin{tabular}{
				l
				S[table-format=3.2]
				S[table-format=3.2]
				S[table-format=3.2]
				S[table-format=3.2]
			}
			& \multicolumn{2}{ c }{GTSRB, $\epsilon=0.1$} & \multicolumn{2}{ c }{CIFAR-10, $\epsilon=\frac{2}{255}$} \\
			\toprule
			
			\multicolumn{1}{ c }{$k$} &
			\multicolumn{1}{ c }{$|\mathcal{E}_{\mathbf{x}}|$} &
			\multicolumn{1}{ c }{time [s]} &
			\multicolumn{1}{ c }{$|\mathcal{E}_{\mathbf{x}}|$} &
			\multicolumn{1}{ c }{time [s]} \\
			
			\cmidrule(lr){1-1} \cmidrule(lr){2-3} \cmidrule(lr){4-5}
			
			\multicolumn{1}{ c }{$25$}
			& 287.02 & 40.95 &   204.23 & 14.44  \\
			
			\multicolumn{1}{ c }{$500$}
			& 287.02 & 29.42 & 204.23 & 13.90  \\
			
			\multicolumn{1}{ c }{$\infty$}
			& \B 287.02 & \B 30.48 & \B 204.23 & \B 13.92  \\

			\bottomrule
			
		\end{tabular}
	\end{adjustbox}
\end{table}

\begin{table}[tb!]
	\caption{Ablation for the maximum number of leaves that can be stored for reuse within \textsc{FaVeX} (denoted $k$) for the experiments from Table~\ref{table:time-cnn3}. Results are averaged over the first $10$ images of the test set. Bold entries correspond to the tuned $k$ value employed in the original experiment. \label{table:leaves-cnn3}}
	\sisetup{detect-weight=true,detect-inline-weight=math,detect-mode=true}
	\centering
	
	\renewcommand{\bfseries}{\fontseries{b}\selectfont}
	\newrobustcmd{\B}{\bfseries}
	\newcommand{\boldentry}[2]{%
		\multicolumn{1}{S[table-format=#1,
			mode=text,
			text-rm=\fontseries{b}\selectfont
			]}{#2}}
	\begin{adjustbox}{max width=\textwidth, center}
		\begin{tabular}{
				l
				S[table-format=3.2]
				S[table-format=3.2]
				S[table-format=4.2]
				S[table-format=3.2]
				S[table-format=3.2]
				S[table-format=4.2]
			}
			& \multicolumn{3}{ c }{MNIST, $\epsilon=0.25$} & \multicolumn{3}{ c }{CIFAR-10, $\epsilon=\frac{8}{255}$}  \\
			\toprule
			
			\multicolumn{1}{ c }{$k$} &
			\multicolumn{1}{ c }{$|\mathcal{C}_{\mathbf{x}}|$} &
			\multicolumn{1}{ c }{$|\mathcal{U}_{\mathbf{x}}|$} &
			\multicolumn{1}{ c }{time [s]} &
			\multicolumn{1}{ c }{$|\mathcal{C}_{\mathbf{x}}|$} &
			\multicolumn{1}{ c }{$|\mathcal{U}_{\mathbf{x}}|$} &
			\multicolumn{1}{ c }{time [s]} \\
			
			\cmidrule(lr){1-1} \cmidrule(lr){2-4} \cmidrule(lr){5-7} 
			
			\multicolumn{1}{ c }{$25$}
			& \B 160.30 & \B 94.40 & \B 612.75 & 210.00 & 252.20 & 1139.99  \\
			
			\multicolumn{1}{ c }{$500$}
			& 160.20 & 94.20 & 629.48 & \B 210.40 & \B 251.70 & \B 1150.82\\
			
			\multicolumn{1}{ c }{$\infty$}
			&  159.00 & 96.00 & 785.96 & 209.80 & 253.10 & 1204.79 \\

			\bottomrule
			
		\end{tabular}
	\end{adjustbox}
\end{table}
\begin{table}[tb!]
	\caption{Ablation on $k$, the maximum number of leaves that can be stored for reuse within \textsc{FaVeX}, for the experiments from Table~\ref{table:time-cnn7}. Results are averaged over the first $3$ images of the test set. Bold entries correspond to the tuned $k$ value employed in the original experiment. \label{table:leaves-cnn7}}
	\sisetup{detect-weight=true,detect-inline-weight=math,detect-mode=true}
	\centering
	
	\renewcommand{\bfseries}{\fontseries{b}\selectfont}
	\newrobustcmd{\B}{\bfseries}
	\newcommand{\boldentry}[2]{%
		\multicolumn{1}{S[table-format=#1,
			mode=text,
			text-rm=\fontseries{b}\selectfont
			]}{#2}}
	\begin{adjustbox}{max width=\textwidth, center}
		\begin{tabular}{
				l
				S[table-format=3.2]
				S[table-format=3.2]
				S[table-format=4.2]
				S[table-format=3.2]
				S[table-format=3.2]
				S[table-format=4.2]
			}
			& \multicolumn{3}{ c }{MNIST, $\epsilon=0.25$} & \multicolumn{3}{ c }{CIFAR-10, $\epsilon=\frac{16}{255}$}  \\
			\toprule
			
			\multicolumn{1}{ c }{$k$} &
			\multicolumn{1}{ c }{$|\mathcal{C}_{\mathbf{x}}|$} &
			\multicolumn{1}{ c }{$|\mathcal{U}_{\mathbf{x}}|$} &
			\multicolumn{1}{ c }{time [s]} &
			\multicolumn{1}{ c }{$|\mathcal{C}_{\mathbf{x}}|$} &
			\multicolumn{1}{ c }{$|\mathcal{U}_{\mathbf{x}}|$} &
			\multicolumn{1}{ c }{time [s]} \\
			
			\cmidrule(lr){1-1} \cmidrule(lr){2-4} \cmidrule(lr){5-7} 
			
			\multicolumn{1}{ c }{$25$}
			& \B 207.33 & \B  249.33 & \B  4446.53 & 466.00 & 267.33 & 6542.63 \\
			
			\multicolumn{1}{ c }{$500$}
			& 186.33 & 270.00 & 4889.95 & \B 467.00 & \B 266.33 & \B 6532.98 \\
			
			\multicolumn{1}{ c }{$\infty$}
			& 207.33 & 252.67 & 4658.70 & 463.67 & 272.67 & 6759.26 \\
			
			\bottomrule
			
		\end{tabular}
	\end{adjustbox}
\end{table}

\section{Related Work}\label{sec:relw}

A large body of work from the machine learning community has sought to provide explanations on the decisions of machine learning models.
Earlier works from the machine learning community, for instance, propose to build interpretable approximations of the original model~\cite{lundberg2017unified,ribeiro2018anchors,ribeiro2016should}.
Another line of work tries to find points close to the input such that the decision would change (counterfactuals), which are robust in either the input space~\cite{Leofante23}, or the parameter space~\cite{pmlr-v202-hamman23a, pmlr-v222-jiang24a}.
For a survey on robust counterfactual explanation in machine learning, we direct the reader to~\cite{abs-2402-01928}.
At the same time, multiple approaches to heuristically select a subset of features deemed responsible for the predictions exist, typically based on model gradients~\cite{simonyan2013deep,selvaraju2017grad}.
However, these heuristics lack any type of formal guarantees, which are however crucial for safety-critical systems.
Earlier work on formal explainability was carried out by the logic community, focusing on abductive explanations~\cite{MarquesSilva2021,MarquesSilva22b,MarquesSilva22a} (cf. Section~\ref{sec:verix}).
This was followed by work that adds locality constraints, necessary to obtain meaningful explanations on complex models such as neural networks, first in natural language processing~\cite{LaMalfa21}, and then in computer vision, where the computation of formal explanations has been defined through a series of robustness queries to a neural network verifier~\cite{Wu23, Wu24}.

\medskip

Earlier approaches to neural network verification typically relied on the use of black-box solvers, either from the model checking community~\cite{Katz17}, or from mathematical optimization~\cite{TjengXT19,lomuscio2017approach}.
All these complete approaches can be seen as specific instances of branch-and-bound~\cite{Bunel2018}.
Incomplete approaches, instead, rely on network over-approximations, and were derived in parallel by both the formal methods~\cite{Li19,Gehr18,Singh19a,Singh18,Wang18,Muller22}, and the machine learning~\cite{Gowal19,Zhang18} communities, often relying on tools from optimization~\cite{Wong2018,Dvijotham2018}.
A series of incomplete verifiers has been designed and integrated within branch-and-bound (as bounding algorithm), yielding verifiers that scale significantly better than earlier approaches based on black-box solvers~\cite{Bunel2020,Xu21,DePalma24,Henriksen21}.
The current state-of-the-art, as highlighted by yearly competitions in the area~\cite{brix2023first}, relies on fast incomplete algorithms based on linear bound propagation techniques~\cite{Wang21}, possibly with the addition of optimizations such as custom cutting planes~\cite{zhang2022general,zhou2024scalable}.
A recent work~\cite{Ugare23}, \textsc{IVAN}, proposes to reuse information from previous branch-and-bound calls when verifying slightly modified versions of the same network. Our incremental approach, presented in Section~\ref{subsec:reuse} is a simplified version of this idea yet applied to formal explainability, where the network stays the same, but the verifier is called on a series of similar input domains that differ in the perturbed features of the same input point.

\section{Conclusion and Future Work}\label{sec:concl}

While verified explanations offer a theoretically-principled approach for explainable machine learning, their applicability is severely limited by their scalability.
This work makes a step towards improving the scalability of verified explanations for neural networks classifiers by both introducing a novel algorithm for their computation, and presenting a novel and more realistic definition of verified explanations.


We proposed \textsc{FaVeX} (Section~\ref{sec:favex}), a new algorithm that substantially accelerates the computation of verified explanations. 
Compared to previous work, \textsc{FaVeX} presents three key improvements: 
\begin{enumerate}[(i)]
\item it dynamically 
combines both batch processing of robustness queries in a binary-search fashion and their sequential processing, getting the best of both worlds depending on the query (Section~\ref{subsec:favex}); 
\item it reuses information from previous executions of branch-and-bound, thus accelerating the overall verification process (Section~\ref{subsec:reuse}); 
\item it leverages a novel reduced-space counterfactual search that prevents calls to branch-and-bound by effectively re-using information from previous queries, which are used to quickly find counterfactuals (Section~\ref{subsec:attack}).
\end{enumerate}
Furthermore, we introduced \emph{verifier-optimal robust explanations} (Definition~\ref{def:v-optimal-x}), which explicitly account for the practical incompleteness of neural network verifiers on medium-sized networks by allowing features whose robustness cannot be practically ascertained to be perturbed along the invariants, and produce a hierarchical 
explanation. 
In the case of a perfect verifier, the presented definition reverts to the standard definition of an optimal robust explanation from previous work (Lemma~\ref{lem}).


Our experiments demonstrate that \textsc{FaVeX} yields speed-ups of up to an order of magnitude compared to previous algorithms when computing standard robust explanations on small fully-connected networks, and that it cuts explanation times while discovering more counterfactuals when computing verifier-optimal robust explanations on larger convolutional networks.
At the same time we show that, using \textsc{FaVeX}, verifier-optimal robust explanations can be computed for a fraction of the cost of the standard definition of robust explanations, while at the same time allowing for the production of several counterfactuals, which cannot be otherwise provided.
In particular, \textsc{FaVeX} can find hundreds of counterfactuals on state-of-the-art networks trained for provable robustness to small adversarial examples, which feature hundreds of thousands of neurons, hence producing formal explanations at scale.

\subparagraph*{Threats to Validity.}
While our experimental evaluation demonstrates improvements over prior approaches, several factors may affect the generalizability of our findings:

\begin{description}
\item[Network Architectures]
Our experiments inherit the current limitations of state-of-the-art neural network verification. We focus on piecewise-linear feedforward neural networks with ReLU activations, which are more amenable to verification in practice. Further advances in verification techniques are a prerequisite for scalably extending verified explanations to other architectures.
\item[Computational Resources]
All experiments were conducted with fixed computational resources (6 CPU cores, 20GB RAM, single GPU). While different hardware configurations or resource allocations may affect absolute runtimes, we do not expect them to significantly alter the observed trends.
\item[Domain Specificity]
Consistently with previous work, our evaluation focuses on computer vision tasks (MNIST, GTSRB, CIFAR-10). The effectiveness of \textsc{FaVeX} and verifier-optimal robust explanations on other domains remains to be investigated. 
As the proposed algorithm is not inherently tied to image data, we do not expect fundamentally different behavior on other datasets (the example in Section~\ref{sec:overview} originates from a tabular dataset). On the other hand, domain-specific challenges (for instance, the discreteness of natural language processing data) may require substantial adaptations to verification and formal explainability techniques.
\item[Verifier Choice]
Our results are specific to the \textsc{OVAL} branch-and-bound framework with $\alpha$-$\beta$-CROWN bounding. While different verifiers or verification algorithms may exhibit variations in absolute performance, we expect the qualitative trends observed for \textsc{FaVeX} to generalize across verification backends.
\end{description}

\subparagraph*{Future Work.}
While this work substantially improves on the scalability of formal explanations, their computation still requires in the order of an hour per image on networks with tens of millions of parameters and trained for verifiability. These are small compared to real-world applications in many domains.
Further progress on scalability is hence fundamental to ensure their widest applicability.
Promising directions in this sense include: exploring more complex ways to re-use information from the various queries to the verifier, systematically exploring the interaction between training methods and formal explanability, and devising ad-hoc algorithms to train networks that combine good performance and formal explanability.
Finally, while our experiments focus on vision models, testing and potentially adapting verifier-optimal robust explanations and \textsc{FaVeX} to other data domains is an exciting avenue for future work.


 \bibliography{ecoop2026}

\end{document}